\relax
%File: formatting-instructions-latex-2022.tex
%release 2022.1
\documentclass[letterpaper]{article} % DO NOT CHANGE THIS
\usepackage{aaai22}  % DO NOT CHANGE THIS
\usepackage{times}  % DO NOT CHANGE THIS
\usepackage{helvet}  % DO NOT CHANGE THIS
\usepackage{courier}  % DO NOT CHANGE THIS
\usepackage[hyphens]{url}  % DO NOT CHANGE THIS
\usepackage{graphicx} % DO NOT CHANGE THIS
\urlstyle{rm} % DO NOT CHANGE THIS
  % DO NOT CHANGE THIS
\usepackage{natbib}  % DO NOT CHANGE THIS AND DO NOT ADD ANY OPTIONS TO IT
\usepackage{caption} % DO NOT CHANGE THIS AND DO NOT ADD ANY OPTIONS TO IT
\DeclareCaptionStyle{ruled}{labelfont=normalfont,labelsep=colon,strut=off} % DO NOT CHANGE THIS
\frenchspacing  % DO NOT CHANGE THIS
\setlength{\pdfpagewidth}{8.5in}  % DO NOT CHANGE THIS
\setlength{\pdfpageheight}{11in}  % DO NOT CHANGE THIS
%
% These are recommended to typeset algorithms but not required. See the subsubsection on algorithms. Remove them if you don't have algorithms in your paper.
\usepackage{color}
\usepackage{algorithm}
\usepackage{algorithmic}
\usepackage{mathtools}
\usepackage{amsthm}
\usepackage{multirow}
\usepackage{subfigure}
\usepackage{float}
\usepackage{wrapfig}
\usepackage{eucal}
\usepackage{adjustbox}
\usepackage{amsfonts}
\usepackage{booktabs}
\usepackage{multicol}

\newcommand{\algfont}[1]{\texttt{#1}}

\DeclareMathOperator*{\argmin}{arg\,min}
\newtheorem{theorem}{Theorem}
\newtheorem{corollary}{Corollary}
\newtheorem{lemma}{Lemma}
\newtheorem{assumption}{Assumption}

\usepackage{newfloat}
\usepackage{listings}
\lstset{%
	basicstyle={\footnotesize\ttfamily},% footnotesize acceptable for monospace
	numbers=left,numberstyle=\footnotesize,xleftmargin=2em,% show line numbers, remove this entire line if you don't want the numbers.
	aboveskip=0pt,belowskip=0pt,%
	showstringspaces=false,tabsize=2,breaklines=true}
\floatstyle{ruled}
\newfloat{listing}{tb}{lst}{}
\floatname{listing}{Listing}

\pdfinfo{
/Title (AAAI Press Formatting Instructions for Authors Using LaTeX -- A Guide)
/Author (AAAI Press Staff, Pater Patel Schneider, Sunil Issar, J. Scott Penberthy, George Ferguson, Hans Guesgen, Francisco Cruz, Marc Pujol-Gonzalez)
/TemplateVersion (2022.1)
}

\setcounter{secnumdepth}{0} 

\title{FedProto: Federated Prototype Learning across Heterogeneous Clients}

\author{
    % Authors
    Yue Tan\textsuperscript{\rm 1},
    Guodong Long\textsuperscript{\rm 1}, 
    Lu Liu\textsuperscript{\rm 1}, 
    Tianyi Zhou\textsuperscript{\rm 2,3}, 
    Qinghua Lu\textsuperscript{\rm 4}, 
    Jing Jiang\textsuperscript{\rm 1}, 
    Chengqi Zhang \textsuperscript{\rm 1}
}
\affiliations {
    % Affiliations
    \textsuperscript{\rm 1} Australian Artificial Intelligence Institute, FEIT, University of Technology Sydney\\
    \textsuperscript{\rm 2} University of Washington,
    \textsuperscript{\rm 3} University of Maryland,
    \textsuperscript{\rm 4} Data61, CSIRO\\
    yue.tan@student.uts.edu.au,  $\{$guodong.long, jing.jiang, chengqi.zhang$\}$@uts.edu.au,\\ lu.liu.cs@icloud.com,
    tianyizh@uw.edu, qinghua.lu@data61.csiro.au
}

\usepackage{bibentry}

\begin{document}

\maketitle

\begin{abstract}
Heterogeneity across clients in federated learning (FL) usually hinders the optimization convergence and generalization performance when the aggregation of clients' knowledge occurs in the gradient space.
For example, clients may differ in terms of data distribution, network latency, input/output space, and/or model architecture, which can easily lead to the misalignment of their local gradients.
To improve the tolerance to heterogeneity, we propose a novel federated prototype learning (\algfont{FedProto}) framework in which the clients and server communicate the abstract class prototypes instead of the gradients. 
\algfont{FedProto} aggregates the local prototypes collected from different clients, and then sends the global prototypes back to all clients to regularize the training of local models.
The training on each client aims to minimize the classification error on the local data while keeping the resulting local prototypes sufficiently close to the corresponding global ones.
Moreover, we provide a theoretical analysis to the convergence rate of \algfont{FedProto} under non-convex objectives.
In experiments, we propose a benchmark setting tailored for heterogeneous FL, with \algfont{FedProto} outperforming several recent FL approaches on multiple datasets.
\end{abstract}

\section{Introduction}
Federated learning (FL) is widely used in multiple applications to enable collaborative learning across a variety of clients without sharing private data. It aims at training a global model on a centralized server while all data are distributed over many local clients and cannot be freely transmitted for privacy or communication concerns \cite{mcmahan2016communication}. The iterative process of FL has two steps: (1) each local client is synchronized by the global model and then trained using its local data; and (2) the server updates the global model by aggregating all the local models. Considering that the model aggregation occurs in the gradient space, traditional FL still has some practical challenges caused by the heterogeneity of \textit{data} and \textit{model} \cite{kairouz2019advances}. Efficient algorithms suitable to overcome both these two challenges have not yet been fully developed or systematically examined.

To tackle the \textit{statistical heterogeneity} of data distributions, one straightforward solution is to maintain multiple global models for different local distributions, e.g., the works for clustered FL \cite{sattler2020clustered}. Another widely studied strategy is personalized FL \cite{tan2021towards} where a personalized model is generated for each client by leveraging both global and local information. Nevertheless, most of these methods depend on gradient-based aggregation, resulting in high communication costs and heavy reliance on homogeneous local models. 

However, in real-world applications, \textit{model heterogeneity} is common because of varying hardware and computation capabilities across clients \cite{long2020federated}. Knowledge Distillation (KD)-based FL \cite{lin2020ensemble} addresses this challenge by transferring the teacher model's knowledge to student models with different model architectures. However, these methods require an extra public dataset to align the student and teacher models' outputs, increasing the computation costs. Moreover, the performance of KD-based FL can significantly degrade with the increase in the distribution divergence between the public dataset and on-client datasets that are usually non-IID.

Inspired by prototype learning, merging the prototypes over heterogeneous datasets can effectively integrate the feature representations from diverse data distributions \cite{snell2017prototypical,liu2020universal,dvornik2020selecting}. On-client intelligent agents in the FL system can share knowledge by exchanging information in terms of representations, despite statistical and model heterogeneity \cite{cai2020target, li2021deep}. For example, when we talk about ``dog", different people will have a unique ``imagined picture" or ``prototype" to represent the concept ``dog". Their prototypes may be slightly diverse due to different life experience and visual memory. 
Exchanging these concept-specific prototypes across people enables them to acquire more knowledge about the concept ``dog". Treating each FL client as a human-like intelligent agent, the core idea of our method is to exchange prototypes rather than share model parameters or raw data, which can naturally match the knowledge acquisition behavior of humans.

In this paper, we propose a novel prototype aggregation-based FL framework where only prototypes are transmitted between the server and clients. The proposed solution does not require model parameters or gradients to be aggregated, so it has a huge potential to be a robust framework for various heterogeneous FL scenarios. Concretely, each client can have different model architectures and input/output spaces, but they can still exchange information by sharing prototypes. Each abstract prototype represents a class by the mean representations transformed from the observed samples belonging to the same class. Aggregating the prototypes allows for efficient communication across heterogeneous clients.

Our main contributions can be summarized as follows:
\begin{itemize}
	\item We propose a benchmark setting tailored for heterogeneous FL that considers a more general heterogeneous scenario across local clients. 
	\item We present a novel FL method that significantly improves the communication efficiency in the heterogeneous setting. To the best of our knowledge, we are the first to propose prototype aggregation-based FL.
    \item We theoretically provide a convergence guarantee for our method and carefully derive the convergence rate under non-convex conditions.
	\item Extensive experiments show the superiority of our proposed method in terms of communication efficiency and test performance in several benchmark datasets.
\end{itemize}

\section{Related Work} \label{sec:related-work}

\subsection{Heterogeneous Federated Learning}
Statistical heterogeneity across clients (also known as the non-IID problem) is the most important challenge of FL. \algfont{FedProx} \cite{li2018federated} proposed a local regularization term to optimize each client's local model. Some recent studies \cite{arivazhagan2019federated,liang2020think,deng2020adaptive} train personalized models to leverage both globally shared information and the personalized part \cite{tan2021towards, jiang2020decentralized}. 
The third solution is to provide multiple global models by clustering the local models \cite{mansour2020three,ghosh2020efficient,sattler2020clustered} into multiple groups or clusters. Recently, self-supervised learning strategies are incorporated into the local training phase to handle the heterogeneity challenges \cite{li2021model, liu2021graph, yang2021interpretable}. \cite{fallah2020personalized} applies meta-training strategy for personalized FL.

Heterogeneous model architecture is another major challenging scenario of FL. The recently proposed KD-based FL \cite{lin2020ensemble,jeong2018communication,li2019fedmd,long2021federated} can serve as an alternative solution to address this challenge. In particular, with the assumption of adding a shared toy dataset in the federated setting, these KD-based FL methods can distill knowledge from a teacher model to student models with different model architectures. 
Some recent studies have also attempted to combine the neural architecture search with federated learning \cite{zhu2020federated,he2020fednas,singh2020differentially}, which can be applied to discover a customized model architecture for each group of clients with different hardware capabilities and configurations. A collective learning platform is proposed to handle heterogeneous architectures without access to the local training data and architectures in \cite{hoang2019collective}. Moreover, functionality-based neural matching across local models \cite{wang2020federated} can aggregate neurons with similar functionality regardless of the variance of the model architectures.

However, most of these mentioned FL methods focus on only one heterogeneous challenging scenario. All of them use gradient-based aggregation methods which will raise concerns about communication efficiency and gradient-based attacks \cite{zhu2019deep, chen2020robust, liu2021anomaly, zheng2021generative}.

\subsection{Prototype Learning}
The concept of prototypes (the mean of multiple features) has been explored in a variety of tasks. In image classification, a prototype can be a proxy of a class and is calculated as the mean of the feature vectors within every class \cite{snell2017prototypical}. In action recognition, the features of a video in different timestamps can be averaged to serve as the representation of the video \cite{twostream, xue2021dynamic}. Aggregated local features can serve as descriptors for image retrieval \cite{babenko2015aggregating}.
Averaging word embeddings as the representation of a sentence can achieve competitive performance on multiple NLP benchmarks \cite{wieting2015towards}.
The authors in \cite{hoang2020learning} use prototypes to represent task-agnostic information in distributed machine learning and propose a new fusion paradigm to integrate those prototypes to generate a new model for a new task. In \cite{michieli2021prototype}, prototype margins are used to optimize visual feature representations for FL.
In our paper, we borrow the concept of prototypes to represent one class and apply prototype aggregation in the setting of heterogeneous FL.

In general, prototypes are widely used in learning scenarios with a limited number of training samples \cite{snell2017prototypical}. This learning scenario is consistent with the latent assumption of cross-client FL: that each client has a limited number of instances to independently train a model with the desired performance. The assumption has been widely supported by the FL-based benchmark datasets \cite{caldas2018leaf,he2020fedml} and in related applications, such as healthcare \cite{rieke2020future,xu2020federated} and street image object detection \cite{luo2019real}.

\begin{figure*}[htbp!]
	\centering
	\setlength{\abovecaptionskip}{0.cm}
	\setlength{\belowcaptionskip}{-0.4cm}
	\includegraphics[width=0.9\textwidth]{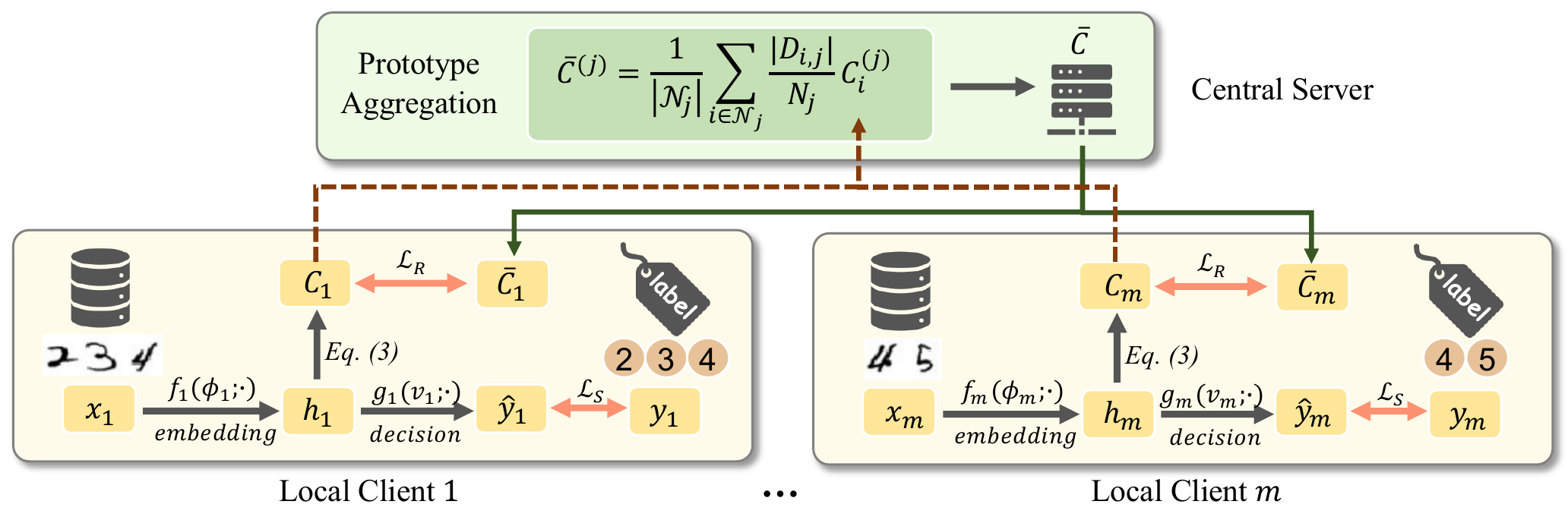}
	\caption{An overview of \algfont{FedProto} in the heterogeneous setting. For example, the first client is to recognize the digits ${2,3,4}$, while the $m$-th client is to recognize the digits ${4,5}$. First, the clients update their local prototype sets by minimizing the loss of classification error $\mathcal{L}_S$ and the distance between global prototypes and local prototypes $\mathcal{L}_R$. Then, the clients send their prototypes to the central server. The central server generates global prototypes and returns them to all clients to regularize the training of local models.}
	\label{framework}
\end{figure*}

\section{Problem Setting} \label{sec:problem}
\subsection{Heterogeneous Federated Learning Setting}
In federated learning, each client owns a local private dataset ${D}_i$ drawn from distribution $\mathbb{P}_i(x,y)$, where $x$ and $y$ denote the input features and corresponding class labels, respectively. Usually, clients share a model $\mathcal{F}(\omega;x)$ with the same architecture and hyperparameters. This model is parameterized by learnable weights $\omega$ and input features $x$. The objective function of \algfont{FedAvg}~\cite{mcmahan2016communication} is:
\begin{equation} \label{eq:sl-lossw}
\argmin_{\omega} \sum_{i=1}^{m} \frac{|D_i|}{N} \mathcal{L}_S(\mathcal{F}(\omega; x), y),
\end{equation}
\noindent where $\omega$ is the global model's parameters, $m$ denotes the number of clients, $N$ is the total number of instances over all clients, $\mathcal{F}$ is the shared model, and $\mathcal{L}_S$ is a general definition of any supervised learning task (e.g., a cross-entropy loss).

However, in a real-world FL environment, each client may represent a mobile phone with a specific user behavior pattern or a sensor deployed in a particular location, leading to statistical and/or model heterogeneous environment. In the statistical heterogeneity setting, $\mathbb{P}_i$ varies across clients, indicating heterogeneous input/output space for $x$ and $y$. For example, $\mathbb{P}_i$ on different clients can be the data distributions over different subsets of classes. In the model heterogeneity setting, $\mathcal{F}_i$ varies across clients, indicating different model architectures and hyperparameters. For the $i$-th client, the training procedure is to minimize the loss as defined below:
\begin{equation} \label{eq:fed-sp-loss}
\small
\argmin_{\omega_1,\omega_2,\dots,\omega_m} \sum_{i=1}^{m} \frac{|D_i|}{N} \mathcal{L}_S(\mathcal{F}_i(\omega_i; x), y).
\end{equation}

Most existing methods cannot well handle the heterogeneous settings above. In particular, the fact that $\mathcal{F}_i$ has a different model architecture would cause $\omega_i$ to have a different format and size. Thus, the global model's parameter $\omega$ cannot be optimized by averaging $\omega_i$. To tackle this challenge, we propose to communicate and aggregate prototypes in FL.%\looseness-1 
\subsection{Prototype-Based Aggregation Setting}
Heterogeneous FL focuses on the robustness to tackle heterogeneous input/output spaces, distributions and model architectures. For example, the datasets $D_{i}$ and $D_{k}$ on two clients $i$ and $k$ may take different statistical distributions of labels. This is common for a photo classification APP installed on mobile clients, where the server needs to recognize many classes $\mathbb{C}=\{{C}^{(1)}, {C}^{(2)}, \dots\}$, while each client only needs to recognize a few classes that constitute a subset of $\mathbb{C}$. The class set can vary across clients, though there are overlaps. 

In general, the deep learning-based models comprise two parts: (1) representation layers (a.k.a. embedding functions) to transform the input from the original feature space to the embedding space; and (2) decision layers to make a classification decision for a given learning task. 

\paragraph{Representation layers}
The embedding function of the $i$-th client is $f_i(\phi_i)$ parameterized by $\phi_i$. We denote $h_i=f_i(\phi_i;x)$ as the embeddings of $x$.

\paragraph{Decision layers} Given a supervised learning task, a prediction for $x$ can be generated by the function $g_i(\nu_i)$ parameterized by $\nu_i$. So, the labelling function can be written as $\mathcal{F}_i(\phi_i, \nu_i) = g_i(\nu_i) \circ f_i(\phi_i)$, and we use $\omega_i$ to represent $(\phi_i, \nu_i)$ for short.

\paragraph{Prototype}
We define a prototype ${C}^{(j)}$ to represent the $j$-th class in $\mathbb{C}$. For the $i$-th client, the prototype is the mean value of the embedding vectors of instances in class $j$,
\begin{equation}
\small
C_{i}^{(j)} = \frac{1}{|D_{i,j}|} \sum_{(x,y) \in D_{i,j}} f_i(\phi_i;x),
\label{eq:local_prototype}
\end{equation}

\noindent where $D_{i,j}$, a subset of the local dataset $D_i$, is comprised of training instances belonging to the $j$-th class.

\paragraph{Prototype-based model inference}
In the inference stage of the learning task, we can simply predict the label $\hat{y}$ to an instance $x$ by measuring the L2 distance between the instance's representational vector $f(\phi;x)$ and the prototype ${C}^{(j)}$ as follows:
\begin{equation}
\small
\hat{y} = \argmin_j || f(\phi;x) - C^{(j)} ||_2.
\end{equation}

\section{Methodology} \label{sec:methodology}

We propose a solution for heterogeneous FL that uses prototypes as the key component for exchanging information across the server and the clients. 

An overview of the proposed framework is shown in Figure \ref{framework}. The central server receives local prototype sets $C_1, C_2,  \ldots, C_m$ from $m$ local clients, and then aggregates the prototypes by averaging them. In the heterogeneous FL setting, these prototype sets overlap but are not the same. Taking the MNIST dataset as an example, the first client is to recognize the digits ${2,3,4}$, while another client is to recognize the digits ${4,5}$. These are two different handwritten digits set; nonetheless, there is an overlap. The server automatically aggregates prototypes from the overlapping class space across the clients. 

Using prototypes in FL, we do not need to exchange gradients or model parameters, which means that the proposed solution can tackle heterogeneous model architectures. Moreover, the prototype-based FL does not require each client to provide the same classes, meaning the heterogeneous class spaces are well supported. Thus, heterogeneity challenges in FL can be addressed. 
\vspace{-0.1cm}
\subsection{Optimization Objective}
\vspace{-0.05cm}
The objective of \algfont{FedProto} is to solve a joint optimization problem on a distributed network. \algfont{FedProto} applies prototype-based communication, which allows a local model to align its prototypes with other local models while minimizing the sum of loss for all clients' local learning tasks. The objective of federated prototype learning across heterogeneous clients can be formulated as
\begin{equation} \label{eq:lambda}
\small
\begin{aligned}
\argmin_{\left\{\bar{C}^{(j)}\right\}_{j=1}^{|\mathbb{C}|}}  \sum_{i=1}^m  \frac{|D_i|}{N} \mathcal{L}_S(\mathcal{F}_i(\omega_i;x), y) + \\
\lambda \cdot \sum_{j=1}^{|\mathbb{C}|} \sum_{i=1}^{m} \frac{|D_{i,j}|}{N_j} \mathcal{L}_R(\bar{C}_i^{(j)}, C_i^{(j)}),
\end{aligned}
\end{equation}
\vspace{-0.2cm}

\noindent where $\mathcal{L}_S$ is the loss of supervised learning (as defined in Eq.~\eqref{eq:fed-sp-loss}) and $\mathcal{L}_R$ is a regularization term that measures the distance (we use L2 distance) between a local prototype $C^{(j)}$ and the corresponding global prototypes $\bar{C}^{(j)}$. 
$N$ is the total number of instances over all clients, and $N_j$ is the number of instances belonging to class $j$ over all clients. 

The optimization problem can be addressed by alternate minimization that iterates the following two steps: (1) minimization w.r.t. each $\omega_i$ with $\bar{C}_i^{(j)}$ fixed; and (2) minimization w.r.t. $\bar{C}_i^{(j)}$ with all $\omega_i$ fixed. In a distributed setting, step (1) reduces to conventional supervised learning on each client using its local data, while step (2) aggregates local prototypes from local clients on the server end. Further details concerning these two steps can be seen in Algorithm~\ref{alg1}.

\begin{algorithm}[htbp!]
	\caption{\algfont{FedProto}} 
	\hspace*{0.02in} {\bf Input:} 
	$D_i, \omega_i, i=1,\cdots, m$\\
	\hspace*{0.02in} {\bf Server executes:}
	\begin{algorithmic}[1]
		\STATE Initialize global prototype set $\left\{\bar{C}^{(j)}\right\}$ for all classes.
		\FOR{each round $T = 1,2,...$} 
		\FOR{each client $i$ {\bf in parallel}}
		\STATE $C_i \leftarrow$ LocalUpdate $\left(i, \bar{C}_{i}\right)$
		\ENDFOR
		\STATE Update global prototype by Eq.~\ref{eq:proto_aggregation}.
		\STATE Update local prototype set $C_i$ with prototypes in $\{\bar{C}^{(j)}\}$
		\ENDFOR
	\end{algorithmic} 
	\hspace*{0.02in} \\
	\hspace*{0.02in} {\bf LocalUpdate}$\left(i, \bar{C}_i\right)$:
	\begin{algorithmic}[1]
		\FOR{each local epoch}
		\FOR{batch $\left(x_{i}, y_{i}\right) \in D_i$}
		\STATE Compute local prototype by Eq.~\ref{eq:local_prototype}.
		\STATE Compute loss by Eq.~\ref{eq:loss} using local prototypes.
		\STATE Update local model according to the loss.
		\ENDFOR
		\ENDFOR
		\RETURN $C^{(i)}$
	\end{algorithmic}
	\label{alg1}
\end{algorithm}

\vspace{-0.1cm}
\subsection{Global Prototype Aggregation}
\vspace{-0.05cm}
Given the data and model heterogeneity in the participating clients, the optimal model parameters for each client are not the same. This means that gradient-based communication cannot sufficiently provide useful information to each client. However, the same label space allows the participating clients to share the same embedding space and information can be efficiently exchanged across heterogeneous clients by aggregating prototypes according to the classes they belong to. 

Given a class $j$, the server receives prototypes from a set of clients that have class $j$. A global prototype $\bar{C}^{(j)}$ for class $j$ is generated after the prototype aggregating operation,
\begin{equation}
\small
\setlength\abovedisplayskip{0cm}
\setlength\belowdisplayskip{-0.1cm}
\bar{C}^{(j)}=\frac{1}{\left|\mathcal{N}_j\right|} \sum_{i\in{\mathcal{N}_j}} \frac{|D_{i,j}|}{N_j}C^{(j)}_{i},
\label{eq:proto_aggregation}
\end{equation}

\noindent where $C_i^{(j)}$ denotes the prototype of class $j$ from client $i$, and $\mathcal{N}_j$ denotes the set of clients that have class $j$.

\vspace{-0.1cm}
\subsection{Local Model Update}
\vspace{-0.05cm}
The client needs to update the local model to generate a consistent prototype across the clients. To this end, a regularization term is added to the local loss function, enabling the local prototypes $C_i^{(j)}$ to approach global prototypes $\bar{C}_i^{(j)}$ while minimizing the loss of the classification error. In particular, the loss function is defined as follows: 
\begin{equation}
\small
\mathcal{L}(D_i, \omega_i) = \mathcal{L}_S(\mathcal{F}_i(\omega_i;x), y) + \lambda \cdot \mathcal{L}_R\left(\bar{C}_i, C_i\right),
\label{eq:loss}
\end{equation}
\noindent where $\lambda$ is an importance weight, and $\mathcal{L}_R$ is the regularization term that can be defined as:
\begin{equation}
\small
\setlength\belowdisplayskip{-0.02cm}
\mathcal{L}_R = \sum_j d(C_i^{(j)}, \bar{C}_i^{(j)}),
\end{equation}
where $d$ is a distance metric of local generated prototypes $C^{(j)}$ and global aggregated prototypes $\bar{C}^{(j)}$. The distance measurement can take a variety of forms, such as L1 distance, L2 distance, and earth mover's distance. 

\vspace{-0.1cm}
\subsection{Convergence Analysis}
\vspace{-0.05cm}
We provide insights into the convergence analysis for \algfont{FedProto}. We denote the local objective function defined in Eq. \ref{eq:loss} as $\mathcal{L}$ with a subscript indicating the number of iterations and make the following assumptions similar to existing general frameworks \cite{wang2020tackling,li2018federated}. 
\begin{assumption}\label{as1}

{\rm(Lipschitz Smooth).} Each local objective function is $L_1$-Lipschitz smooth, which means that the gradient of local objective function is $L_1$-Lipschitz continuous,
\begin{equation}
\small
\begin{aligned}
\|\nabla \mathcal{L}_{{t_1}}&-\nabla \mathcal{L}_{{t_2}}\|_2 \leq L_1\| \omega_{{i,t_1}}  -\omega_{{i,t_2}} \|_2, \\
&\forall t_1, t_2 >0, i \in \{1,2,\dots, m\}.
\end{aligned}
\end{equation}

\noindent This also implies the following quadratic bound,
\begin{equation}
\small
\begin{aligned}
\mathcal{L}_{{t_1}} - \mathcal{L}_{{t_2}} &\leq \langle \nabla \mathcal{L}_{{t_2}}, (\omega_{{i,t_1}}  -\omega_{{i,t_2}})\rangle + \frac{L_1}{2} \|  \omega_{{i,t_1}}  -\omega_{{i,t_2}} \|_2^2 , \\
&\forall t_1, t_2 >0, \quad i \in \{1,2,\dots, m\}.
\end{aligned}
\end{equation}
\end{assumption}

\begin{assumption} \label{as2}
{\rm(Unbiased Gradient and Bounded Variance).} The stochastic gradient $g_{i,t} = \nabla \mathcal{L}(\omega_{t}, \xi_{t}) $ is an unbiased estimator of the local gradient for each client. Suppose its expectation 
\begin{equation}
\small
{\Bbb E}_{\xi_i \sim D_i}[g_{i,t}] = \nabla \mathcal{L}(\omega_{i,t}) = \nabla \mathcal{L}_{t}, \forall i \in \{1,2,\dots, m\},
\end{equation}
\noindent and its variance is bounded by $\sigma^2$: 
\begin{equation}
\small
{\Bbb E}[{\| g_{i,t} - \nabla \mathcal{L}(\omega_{i,t}) \|}_2^2]\leq \sigma^2, \forall i \in \{1,2,\dots, m\}, \sigma^2 \geq 0.
\end{equation}
\end{assumption}

\begin{assumption} \label{as3}
{\rm(Bounded Expectation of Euclidean norm of Stochastic Gradients).}The expectation of the stochastic gradient is bounded by $G$:
\begin{equation}
\small
{\Bbb E}[ \| g_{i,t} \|_2] \leq G, \forall i \in \{1,2,\dots, m\}.
\end{equation}
\end{assumption}

\begin{assumption}\label{as4}
{\rm(Lipschitz Continuity).} 
Each local embedding function is $L_2$-Lipschitz continuous, that is,
\begin{equation}
\small
\begin{aligned}
\| f_i(\phi_{i,t_1})&- f_i(\phi_{i,t_2})\| \leq L_2\| \phi_{i,t_1}  -\phi_{i,t_2} \|_2, \\
&\forall t_1, t_2 >0, i \in \{1,2,\dots, m\}.
\end{aligned}
\end{equation}
\end{assumption}

Based on the above assumptions, we present the theoretical results for the non-convex problem. The expected decrease per round is given in {Theorem \ref{th1}}. We denote $e \in \{1/2,1,2,\dots,E$\} as the local iteration, and $t$ as the global communication round. Moreover, $tE$ represents the time step before prototype aggregation, and $tE+1/2$ represents the time step between prototype aggregation and the first iteration of the current round. 

\begin{theorem} \label{th1}
{\rm (One-round deviation).} Let Assumption 1 to 4 hold. For an arbitrary client, after every communication round, we have,
{
\setlength\abovedisplayskip{-0.2cm}
\setlength\belowdisplayskip{-0.5cm}
\begin{equation} \label{eq: th1}
\small
\begin{aligned}
{\Bbb E} [\mathcal{L}_{(t+1)E+1/2}] \leq &\mathcal{L}_{tE+1/2} -\left(\eta-\frac{L_1 \eta^2}{2}\right) \sum_{e=1/2}^{E-1} \|\nabla \mathcal{L}_{tE+e} \|_2^2 \\
&+ \frac{L_1 E \eta^2}{2} \sigma^2 + \lambda L_2 \eta E G.
\end{aligned}
\end{equation}
}
\vspace{-0.4cm}
\end{theorem}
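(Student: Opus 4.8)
The plan is to split a single communication round into two consecutive phases and bound each separately: the $E$ local stochastic gradient steps, during which the regularizer target $\bar{C}$ is frozen, and the single prototype-aggregation step, during which the model parameters $\omega_i$ (and hence the local prototype $C_i$ and $\mathcal{L}_S$) are frozen while the global prototype is refreshed. The target inequality then follows by chaining the two bounds through the intermediate time step $(t+1)E$, i.e.\ writing $\mathbb{E}[\mathcal{L}_{(t+1)E+1/2}]-\mathcal{L}_{tE+1/2} = \big(\mathbb{E}[\mathcal{L}_{(t+1)E}]-\mathcal{L}_{tE+1/2}\big)+\big(\mathbb{E}[\mathcal{L}_{(t+1)E+1/2}]-\mathbb{E}[\mathcal{L}_{(t+1)E}]\big)$ and bounding each parenthesis.

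\textbf{Phase 1 (local SGD descent).} Because $\bar{C}$ is held fixed across the local iterations, $\mathcal{L}$ is a fixed function of $\omega$ and Assumption~\ref{as1} applies directly. Starting from the quadratic bound with the SGD update $\omega_{i,tE+e+1}=\omega_{i,tE+e}-\eta\, g_{i,tE+e}$, I would write $\mathcal{L}_{tE+e+1}-\mathcal{L}_{tE+e}\le -\eta\langle\nabla\mathcal{L}_{tE+e},g_{i,tE+e}\rangle+\tfrac{L_1\eta^2}{2}\|g_{i,tE+e}\|_2^2$. Taking conditional expectation and invoking Assumption~\ref{as2} turns the inner product into $\|\nabla\mathcal{L}_{tE+e}\|_2^2$ (unbiasedness) and bounds $\mathbb{E}\|g_{i,tE+e}\|_2^2\le\|\nabla\mathcal{L}_{tE+e}\|_2^2+\sigma^2$ (variance bound), yielding the per-step descent $\mathbb{E}[\mathcal{L}_{tE+e+1}]\le\mathcal{L}_{tE+e}-(\eta-\tfrac{L_1\eta^2}{2})\|\nabla\mathcal{L}_{tE+e}\|_2^2+\tfrac{L_1\eta^2}{2}\sigma^2$. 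Summing over the $E$ local steps $e=1/2,\dots,E-1$ produces exactly the first three terms of the claim, with $\mathbb{E}[\mathcal{L}_{(t+1)E}]$ on the left.

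\textbf{Phase 2 (prototype aggregation).} Here $\omega_i$ is unchanged, so $\mathcal{L}_S$ and the local prototype $C_i$ do not move; the only change in $\mathcal{L}$ from $(t+1)E$ to $(t+1)E+1/2$ is the replacement of the old global prototype by the newly aggregated one inside $\mathcal{L}_R$. Since $\mathcal{L}_R$ is an L2 distance, the reverse triangle inequality bounds this change by $\lambda$ times the displacement of the global prototype. As the global prototype in Eq.~\eqref{eq:proto_aggregation} is a weighted average of local prototypes, and each local prototype is itself an average of embeddings $f_i(\phi_i;x)$, I would push the bound through Assumption~\ref{as4} to obtain $L_2\|\phi_{i,(t+1)E}-\phi_{i,tE+1/2}\|$, i.e.\ the total parameter movement over the round. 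The SGD recursion writes that movement as $\eta\sum_e g_{i,tE+e}$, and Assumption~\ref{as3} ($\mathbb{E}\|g_{i,t}\|_2\le G$) bounds its expectation by $\eta E G$, giving the final $+\lambda L_2\eta E G$ term. Adding this to the Phase~1 bound yields Eq.~\eqref{eq: th1}.

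The main obstacle is Phase~2: correctly isolating that only the regularization term moves, and then assembling the Lipschitz chain \emph{distance change $\to$ prototype displacement $\to$ embedding displacement $\to$ parameter displacement $\to$ accumulated stochastic gradient $\to G$} while keeping the constants intact. In particular, one must apply the reverse triangle inequality to sidestep the sign of the distance change, and invoke Assumption~\ref{as4} (not the smoothness of Assumption~\ref{as1}) at precisely the step linking prototype movement to parameter movement; one must also verify that the aggregation weights do not inflate the bound. Phase~1 is the standard nonconvex SGD descent argument and should be routine by comparison.
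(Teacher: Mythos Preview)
Your proposal is correct and follows essentially the same route as the paper: the proof there is split into Lemma~1 (your Phase~1, the standard $L_1$-smooth SGD descent telescoped over the $E$ local steps using Assumptions~\ref{as1}--\ref{as2}) and Lemma~2 (your Phase~2, the reverse triangle inequality on $\mathcal{L}_R$ followed by the chain \emph{global prototype displacement $\to$ weighted local prototype displacements $\to$ embedding displacements via Assumption~\ref{as4} $\to$ parameter displacement $\to$ accumulated gradients via Assumption~\ref{as3}}), after which Theorem~\ref{th1} is obtained by summing the two bounds exactly as you outline. The only detail to be slightly more explicit about is that in Phase~2 the global-prototype displacement involves a weighted sum over \emph{all} clients' parameter movements, not just client $i$'s, so Assumption~\ref{as3} is applied clientwise and the convex weights $q_i$ (summing to one) are what keep the final constant at $\lambda L_2\eta E G$; you already flag this point, so there is no gap.
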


Theorem \ref{th1} indicates the deviation bound of the local objective function for an arbitrary client after each communication round. Convergence can be guaranteed when there is a certain expected one-round decrease, which can be achieved by choosing appropriate $\eta$ and $\lambda$.

\begin{corollary} \label{co1}
{\rm (Non-convex \algfont{FedProto} convergence).} 
\noindent The loss function $\mathcal{L}$ of an arbitrary client monotonously decreases in every communication round when
\begin{equation}
\small
\eta_{e'}<\frac{2( \sum_{e=1/2}^{e'} \|\nabla \mathcal{L}_{tE+e} \|_2^2- \lambda L_2 EG)}{L_1(\sum_{e=1/2}^{e'} \|\nabla \mathcal{L}_{tE+e} \|_2^2+E\sigma^2)},
\end{equation}
\noindent where $e'=1/2,1,\dots, E-1$, and
\begin{equation}
\small
\lambda_t < \frac{ \|\nabla \mathcal{L}_{tE+1/2} \|_2^2}{L_2 EG }.
\end{equation}
Thus, the loss function converges.
\end{corollary}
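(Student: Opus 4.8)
The plan is to obtain the corollary as a direct consequence of the one-round deviation bound in Theorem~\ref{th1}. Monotone decrease over a communication round means $\mathbb{E}[\mathcal{L}_{(t+1)E+1/2}] < \mathcal{L}_{tE+1/2}$, so by Theorem~\ref{th1} it suffices to force the sum of the non-descent terms on the right-hand side to be strictly negative. Writing $S_{e'} = \sum_{e=1/2}^{e'}\|\nabla\mathcal{L}_{tE+e}\|_2^2$ for the (partial) gradient-norm mass, the requirement is
\begin{equation}
\frac{L_1 E\eta^2}{2}\sigma^2 + \lambda L_2\eta EG < \left(\eta - \frac{L_1\eta^2}{2}\right) S_{e'}.
\end{equation}

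First I would divide through by $\eta > 0$ and collect the terms linear in $\eta$ on one side. This isolates $\tfrac{L_1\eta}{2}(S_{e'} + E\sigma^2)$ against $S_{e'} - \lambda L_2 EG$, and rearranging gives exactly the stated ceiling
\begin{equation}
\eta_{e'} < \frac{2\left(S_{e'} - \lambda L_2 EG\right)}{L_1\left(S_{e'} + E\sigma^2\right)}.
\end{equation}
The appearance of the \emph{cumulative} sum $S_{e'}$ (rather than the full sum that literally appears in Theorem~\ref{th1}) is explained by the fact that the theorem's bound is itself assembled by telescoping a per-iteration descent inequality across $e = 1/2, 1, \dots, E-1$. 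Truncating that telescoping sum at any intermediate $e'$ reproduces the same inequality with $S_{e'}$ in place of the full sum, so imposing strict descent at every truncation point yields the family of conditions on $\eta_{e'}$.

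Second, I would note that this upper bound is vacuous unless its numerator $S_{e'} - \lambda L_2 EG$ is positive, since otherwise no positive learning rate can satisfy it. Because $S_{e'}$ is nondecreasing in $e'$ and smallest at $e' = 1/2$, where $S_{1/2} = \|\nabla\mathcal{L}_{tE+1/2}\|_2^2$, positivity for \emph{all} $e'$ is guaranteed by enforcing it at the first step, which is precisely $\lambda_t < \|\nabla\mathcal{L}_{tE+1/2}\|_2^2 / (L_2 EG)$. Under this $\lambda$ condition the admissible range of $\eta_{e'}$ is nonempty at every iteration, and the per-round decrease accumulates into the claimed monotone convergence.

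I expect the main obstacle to be the bookkeeping that connects the full-round statement of Theorem~\ref{th1} to the per-iteration ($e'$-indexed) conditions: one must verify that the underlying descent decomposition genuinely permits truncation at intermediate $e'$ and, in particular, track why the $\sigma^2$ and regularization contributions retain the factor $E$ rather than scaling with $e'$ when the partial sum $S_{e'}$ is substituted. Once this accounting is pinned down, everything else is elementary---solving a single inequality that is linear in $\eta$ after dividing by $\eta$, and checking the sign of a numerator.
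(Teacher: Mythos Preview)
Your proposal is correct and matches the paper's approach: both start from Theorem~\ref{th1}, force the non-descent terms to be negative, solve the resulting inequality (linear in $\eta$ after dividing by $\eta$), and then observe that positivity of the numerator drives the $\lambda$ condition. In fact, you supply \emph{more} justification than the paper does for the passage from the full sum $\sum_{e=1/2}^{E-1}$ to the partial sums $S_{e'}$: the paper first derives the condition with the full sum and then simply writes ``In practice, we use'' before stating the $e'$-indexed version and the $\lambda_t$ bound with only the first gradient term, without explaining either substitution. Your telescoping/truncation rationale and your monotonicity argument for $\lambda_t$ are reasonable glosses on a step the authors leave unexplained---and the bookkeeping worry you flag about the persistent factor $E$ is exactly the loose end the paper does not address.
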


Corollary \ref{co1} is to ensure the expected deviation of $\mathcal{L}$ to be negative, so the loss function converges. It can guide the choice of appropriate values for the learning rate $\eta$ and the importance weight $\lambda$ to guarantee the convergence.

\begin{theorem} \label{th2}
{\rm (Non-convex convergence rate of \algfont{FedProto})}. Let Assumption 1 to 4 hold and $\Delta =\mathcal{L}_0-\mathcal{L}^* $ where $\mathcal{L}^*$ refers to the local optimum. For an arbitrary client, given any $\epsilon>0$, after 
\begin{equation}
\small
T = \frac{2\Delta}{E \epsilon (2\eta - L_1 \eta^2) - E \eta (L_1 \eta \sigma^2 + 2\lambda L_2 G)}
\end{equation} 
communication rounds of \algfont{FedProto}, we have 
\begin{equation}
\small
\frac{1}{T E}\sum_{t=0}^{T-1} \sum_{e=1/2}^{E-1}{\Bbb E}[\|\nabla \mathcal{L}_{tE+e} \|_2^2] < \epsilon,
\end{equation}
if 
{\small$$
\eta < \frac{2(\epsilon-\lambda L_2 G)}{L_1(\epsilon+\sigma^2)}  \ and\ \  \lambda <  \frac{\epsilon}{L_2 G}.
$$}
\end{theorem}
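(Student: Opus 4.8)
The plan is to treat Theorem~\ref{th1} as a per-round descent lemma and turn it into a rate by telescoping over the $T$ communication rounds. First I would rearrange the one-round bound of Eq.~\eqref{eq: th1} so that the sum of squared gradient norms is isolated on the left,
$$\left(\eta - \frac{L_1\eta^2}{2}\right)\sum_{e=1/2}^{E-1}\|\nabla\mathcal{L}_{tE+e}\|_2^2 \leq \mathcal{L}_{tE+1/2} - {\Bbb E}[\mathcal{L}_{(t+1)E+1/2}] + \frac{L_1 E\eta^2}{2}\sigma^2 + \lambda L_2\eta E G.$$
Taking total expectation and summing this inequality over $t=0,1,\dots,T-1$ collapses the loss terms into the telescoping difference ${\Bbb E}[\mathcal{L}_{1/2}] - {\Bbb E}[\mathcal{L}_{TE+1/2}]$, while the two variance/regularization terms simply accumulate a factor of $T$.

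Next I would bound the telescoped difference above by $\Delta = \mathcal{L}_0 - \mathcal{L}^*$, using that $\mathcal{L}^*$ is a lower bound for the loss so that ${\Bbb E}[\mathcal{L}_{TE+1/2}] \geq \mathcal{L}^*$ and that the initial value is $\mathcal{L}_0$. Dividing through by $TE(\eta - L_1\eta^2/2)$ and cancelling the common $E\eta/2$ factor in the accumulated term then yields a clean bound on the time-averaged squared gradient norm,
$$\frac{1}{TE}\sum_{t=0}^{T-1}\sum_{e=1/2}^{E-1}{\Bbb E}[\|\nabla\mathcal{L}_{tE+e}\|_2^2] \leq \frac{\Delta}{TE\left(\eta - L_1\eta^2/2\right)} + \frac{L_1\eta\sigma^2 + 2\lambda L_2 G}{2 - L_1\eta}.$$
Here the first term decays like $1/T$, whereas the second term is a non-vanishing floor governed by the gradient variance $\sigma^2$ and the prototype-regularization strength $\lambda$.

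Finally I would force this right-hand side below $\epsilon$ and solve for $T$. Setting the $1/T$ term equal to the remaining gap $\epsilon - (L_1\eta\sigma^2 + 2\lambda L_2 G)/(2 - L_1\eta)$ and clearing denominators gives, after cancelling the factor $(2 - L_1\eta)$, exactly the stated threshold $T = 2\Delta / [E\epsilon(2\eta - L_1\eta^2) - E\eta(L_1\eta\sigma^2 + 2\lambda L_2 G)]$. The step that needs the most care — and which I expect to be the only real obstacle — is verifying that this threshold is meaningful, i.e.\ that the denominator is strictly positive. Dividing the positivity condition through by $E\eta$ reduces it to $L_1\eta(\sigma^2 + \epsilon) < 2(\epsilon - \lambda L_2 G)$, which rearranges to precisely $\eta < 2(\epsilon - \lambda L_2 G)/[L_1(\epsilon + \sigma^2)]$; demanding that the right-hand side here be positive is exactly what forces the companion condition $\lambda < \epsilon/(L_2 G)$. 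These are the two hypotheses of the theorem, so they simultaneously push the variance/regularization floor below $\epsilon$ and keep $T$ finite and positive, which completes the argument.
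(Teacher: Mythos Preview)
Your proposal is correct and follows essentially the same route as the paper: rearrange the one-round bound of Theorem~\ref{th1}, telescope over $t=0,\dots,T-1$, bound the collapsed loss difference by $\Delta$, divide by $TE(\eta-L_1\eta^2/2)$, and then read off $T$ and the conditions on $\eta,\lambda$ from the requirement that the right-hand side be below $\epsilon$ with a positive denominator. If anything, you are slightly more careful than the paper in explicitly taking total expectation before telescoping and in deriving the $\eta,\lambda$ constraints from the positivity of the denominator.
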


Theorem \ref{th2} provides the convergence rate, which can confine the expected L2-norm of gradients to any bound, denoted as $\epsilon$, after carefully selecting the number of communication rounds $T$ and hyperparameters including $\eta$ and $\lambda$. The smaller $\epsilon$ is, the larger $T$ is, which means that the tighter the bound is, more communication rounds is required. A detailed proof and analysis are given in Appendix B.

\vspace{-0.2cm}
\section{Discussion} \label{sec:discussion}
\vspace{-0.1cm}
In this section, we discuss the superiority of \algfont{FedProto} from three perspectives: model inference, communication efficiency, and privacy preserving.

\vspace{-0.2cm}
\subsection{Model Inference}
\vspace{-0.1cm}

Unlike many FL methods, the global model in FedProto is not a classifier but a set of class prototypes. When a new client is added to the network, one can initialize its local model with the representation layers of a pre-trained model, e.g. a ResNet18 on ImageNet, and random decision layers. Then, the local client will download the global prototypes of the classes covered in its local dataset and fine-tune the local model by minimizing the local objective. This can support new clients with novel model architectures and spend less time fine-tuning the model on heterogeneous datasets.

\vspace{-0.2cm}
\subsection{Communication Efficiency}
\vspace{-0.1cm}
Our proposed method only transmits prototypes between the server and clients. In general, the size of the prototypes is usually much smaller than the size of the model parameters. Taking MNIST as an example, the prototype size is 50 for each class, while the number of model parameters is 21,500. More details can be found in the experimental section.\looseness-1

\begin{table*}[htbp!]
\small
	\centering
	\setlength{\abovecaptionskip}{0.1cm}
	\setlength{\belowcaptionskip}{-0.15cm}
	\begin{adjustbox}{width=2.1\columnwidth,center}
	\begin{tabular}{clcccccc}
	\toprule
	\multirow{2}*{\textbf{Dataset}} & \multirow{2}*{\textbf{Method}} & \multirow{2}*{\textbf{Stdev}} & \multicolumn{3}{c} {\textbf{Test Average Acc}} & \multirow{2}*{\begin{tabular}[c]{@{}c@{}}\textbf{\# of Comm} \\ \textbf{Rounds} \end{tabular}} & \multirow{2}*{\begin{tabular}[c]{@{}c@{}}\textbf{\# of Comm} \\ \textbf{Params} ($\times 10^3$) \end{tabular}} \\
	\cmidrule{4-6}
	& & & \textbf{$ n=3 $} & \textbf{$ n=4 $} & \textbf{$ n=5 $} & & \\
	\midrule
	\multirow{8}*{\textbf{MNIST}} & {\algfont{Local}} & 2 & 94.05$\pm$2.93 & 93.35$\pm$3.26 & 92.92$\pm$3.17 & \multirow{1}*{0} & \multirow{1}*{0} \\
    & {\algfont{FeSEM} \cite{xie2020multi}} & 2 & 95.26$\pm$3.48 & \textbf{97.06$\pm$2.72} & 96.31$\pm$2.41 & \multirow{1}*{150} & \multirow{1}*{430} \\
	& {\algfont{FedProx} \cite{li2018federated}} & 2 & 96.26$\pm$2.89 & 96.40$\pm$3.33 & 95.65$\pm$3.38 & \multirow{1}*{110} & \multirow{1}*{430} \\
	& {\algfont{FedPer} \cite{arivazhagan2019federated}} & 2 & 95.57$\pm$2.96 & 96.44$\pm$2.62 & 95.55$\pm$3.13 & \multirow{1}*{100} & \multirow{1}*{106} \\
	& {\algfont{FedAvg} \cite{mcmahan2016communication}} & 2 & 95.04$\pm$6.48 & 94.32$\pm$4.89 & 93.22$\pm$4.39 & \multirow{1}*{150} & \multirow{1}*{430} \\
	& {\algfont{FedRep} \cite{collins2021exploiting}} & 2 & 94.96$\pm$2.78 & 95.18$\pm$3.80 & 94.94$\pm$2.81 & \multirow{1}*{100} & \multirow{1}*{110} \\
% 	& {\algfont{FedDF} \cite{lin2020ensemble}} & 2 & $\pm$ & $\pm$ &  & \multirow{1}*{} & \multirow{1}*{} \\
	& {\textbf{\algfont{FedProto}}} & 2 & \textbf{97.13$\pm$0.30} & 96.80$\pm$0.41 & \textbf{96.70$\pm$0.29} & \multirow{1}*{{100}} & \multirow{1}*{\textbf{4}} \\
	& {\textbf{\algfont{FedProto-mh}}} & 2 & 97.07$\pm$0.50 & 96.65$\pm$0.31 & 96.22$\pm$0.36 & \multirow{1}*{100} & \multirow{1}*{\textbf{4}} \\
	\midrule
	
	\multirow{8}*{\textbf{FEMNIST}} & \multirow{1}*{\algfont{Local}} & 1 & 92.50$\pm$10.42	 & 91.16$\pm$5.64 & 87.91$\pm$8.44 & \multirow{1}*{0} & \multirow{1}*{0} \\
    & \multirow{1}*{\algfont{FeSEM} \cite{xie2020multi}} & 1 & 93.39$\pm$6.75 & 91.06$\pm$6.43 & 89.61$\pm$7.89 & \multirow{1}*{200} & \multirow{1}*{16,000} \\
	& \multirow{1}*{\algfont{FedProx} \cite{li2018federated}} & 1 & 94.53$\pm$5.33 & 90.71$\pm$6.24 & 91.33$\pm$7.32 & \multirow{1}*{300} & \multirow{1}*{16,000} \\
	& \multirow{1}*{\algfont{FedPer} \cite{arivazhagan2019federated}} & 1 & 93.47$\pm$5.44 & 90.22$\pm$7.63 & 87.73$\pm$9.64 & \multirow{1}*{250} & \multirow{1}*{102} \\
	& \multirow{1}*{\algfont{FedAvg} \cite{mcmahan2016communication}} & 1 & 94.50$\pm$5.29 & 91.39$\pm$5.23 & 90.95$\pm$7.22 & \multirow{1}*{300} & \multirow{1}*{16,000} \\
	& \multirow{1}*{\algfont{FedRep} \cite{collins2021exploiting}} & 1 & 93.36$\pm$5.34 & 91.41$\pm$5.89 & 	89.98$\pm$6.88  & \multirow{1}*{200} & \multirow{1}*{102} \\
% 	& {\algfont{FedDF} \cite{lin2020ensemble}} & 1 & $\pm$ & $\pm$ &  & \multirow{1}*{} & \multirow{1}*{} \\
	& \multirow{1}*{\textbf{\algfont{FedProto}}} & 1 & 96.82$\pm$1.75 & \textbf{94.93$\pm$1.61} & 93.67$\pm$2.23 & \multirow{1}*{{120}} & \multirow{1}*{\textbf{4}} \\
	& \multirow{1}*{\textbf{\algfont{FedProto-mh}}} & 1 & \textbf{97.10$\pm$1.63} & 94.83$\pm$1.60 & \textbf{93.76$\pm$2.30} & \multirow{1}*{{120}} & \multirow{1}*{\textbf{4}} \\
	\midrule
	
	\multirow{8}*{\textbf{CIFAR10}} & \multirow{1}*{\algfont{Local}} & 1 & 79.72$\pm$9.45 & 67.62$\pm$7.15 & 58.64$\pm$6.57 & \multirow{1}*{0} & \multirow{1}*{0} \\
	& \multirow{1}*{\algfont{FeSEM} \cite{xie2020multi}} & 1 & 80.19$\pm$3.31 & 76.40$\pm$3.23 & 74.17$\pm$3.51 & \multirow{1}*{120} & \multirow{1}*{235,000} \\
	& \multirow{1}*{\algfont{FedProx} \cite{li2018federated}} & 1 & 83.25$\pm$2.44 & 79.20$\pm$1.31 & 76.19$\pm$2.23 & \multirow{1}*{150} & \multirow{1}*{235,000} \\
	& \multirow{1}*{\algfont{FedPer} \cite{arivazhagan2019federated}} & 1 & 84.38$\pm$4.58 & 78.73$\pm$4.59 & 76.21$\pm$4.27 & \multirow{1}*{130} & \multirow{1}*{225,000} \\
	& \multirow{1}*{\algfont{FedAvg} \cite{mcmahan2016communication}} & 1 & 81.72$\pm$2.77 & 76.77$\pm$2.37 & 75.74$\pm$2.61  & \multirow{1}*{150} & \multirow{1}*{235,000} \\
	& \multirow{1}*{\algfont{FedRep} \cite{collins2021exploiting}} & 1 & 81.44$\pm$10.48	 & 76.93$\pm$7.46	 & 73.36$\pm$7.04 & \multirow{1}*{110} & \multirow{1}*{225,000} \\
% 	& {\algfont{FedDF} \cite{lin2020ensemble}} & 1 & $\pm$ & $\pm$ &  & \multirow{1}*{} & \multirow{1}*{} \\
	& \multirow{1}*{\textbf{\algfont{FedProto}}} & 1 &  \textbf{84.49$\pm$1.97} & 79.12$\pm$2.03 & \textbf{77.08$\pm$1.98} & \multirow{1}*{{110}} & \multirow{1}*{\textbf{41}} \\
	& \multirow{1}*{\textbf{\algfont{FedProto-mh}}} & 1 & 83.63$\pm$1.60 & \textbf{79.49$\pm$1.78} & 76.94$\pm$1.33  & \multirow{1}*{{110}} & \multirow{1}*{\textbf{41}} \\
	\bottomrule
	\end{tabular}
	\end{adjustbox}
	\caption{Comparison of FL methods on three benchmark datasets with non-IID split over clients. The best results are in \textbf{bold}. It appears that \algfont{FedProto}, compared to baselines, achieves higher accuracy while using much fewer communicated parameters.} \vspace{-0.5cm}
	\label{perf}
\end{table*}

\vspace{-0.2cm}
\subsection{Privacy Preserving}
\vspace{-0.1cm}

The proposed \algfont{FedProto} requires the exchange of prototypes rather than model parameters between the server and the clients. This property brings benefits to FL in terms of privacy preserving. First, prototypes naturally protect the data privacy, because they are 1D-vectors generated by averaging the low-dimension representations of samples from the same class, which is an irreversible process. Second, attackers cannot reconstruct raw data from prototypes without the access to local models. Moreover, \algfont{FedProto} can be integrated with various privacy-preserving techniques to further enhance the reliability of the system.

\vspace{-0.15cm}
\section{Experiments} \label{sec:experiments}
\vspace{-0.15cm}

\subsection{Training Setups}
\vspace{-0.15cm}

\paragraph{Datasets and local models} 
We implement the typical federated setting where each client owns its local data and transmits/receives information to/from the central server. We use three popular benchmark datasets: MNIST \cite{lecun1998mnist}, FEMNIST \cite{caldas2018leaf} and CIFAR10 \cite{krizhevsky2009learning}. We consider a multi-layer CNN which consists of 2 convolutional layers then 2 fully connected layers for both MNIST and FEMNIST, and ResNet18 \cite{he2016deep} for CIFAR10.
\vspace{-0.2cm}

\paragraph{Local tasks} 
Each client learns a supervised learning task. In particular, to illustrate the local task, we borrow the concept of $n$-way $k$-shot from few-shot learning where $n$ controls the number of classes and $k$ controls the number of training instances per class. To mimic the heterogeneous scenario, we randomly change the value of $n$ and $k$ in different clients. We define an average value for $n$ and $k$, and then add a random noise to each user's $n$ as well as $k$. The purpose of the variance of $n$ is to control the heterogeneity of the class space, while the purpose of the variance of $k$ is to control the imbalance in data size.
\vspace{-0.2cm}

\paragraph{Baselines of FL} We study the performance of \algfont{FedProto} under both the statistical and model heterogeneous settings (\algfont{FedProto-mh}) and make comparisons with baselines, including \algfont{Local} where an individual model is trained for each client without any communication with others, \algfont{FedAvg} \cite{mcmahan2016communication}, \algfont{FedProx} \cite{li2018federated}, \algfont{FeSEM} \cite{xie2020multi}, \algfont{FedPer} \cite{arivazhagan2019federated}, and \algfont{FedRep} \cite{collins2021exploiting}.
% , and \algfont{FedDF} \cite{lin2020ensemble}.

\vspace{-0.1cm}

\begin{figure*}[!htbp]
	\centering
	\setlength{\abovecaptionskip}{0.cm}
	\setlength{\belowcaptionskip}{-0.5cm}
	\subfigure[\algfont{FedProto}]{
		\begin{minipage}[b]{0.235\textwidth}
			\includegraphics[width=1\textwidth]{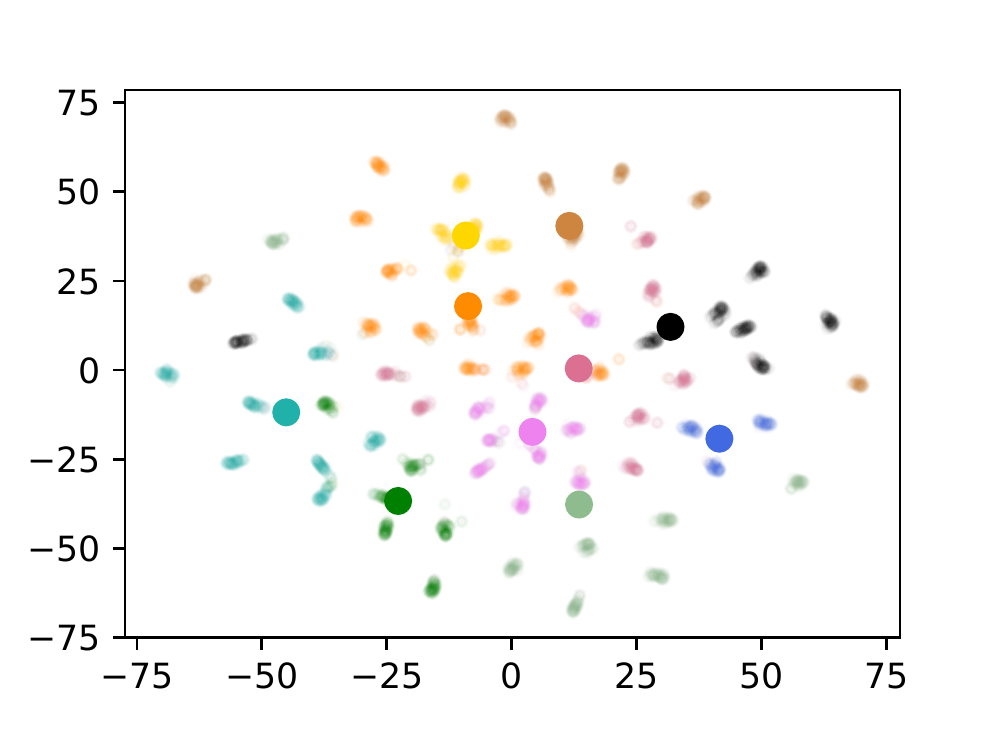}
		\end{minipage}
		\label{protos_fedproto}
	}	
	\subfigure[\algfont{FedAvg}]{
		\begin{minipage}[b]{0.235\textwidth}
			\includegraphics[width=1\textwidth]{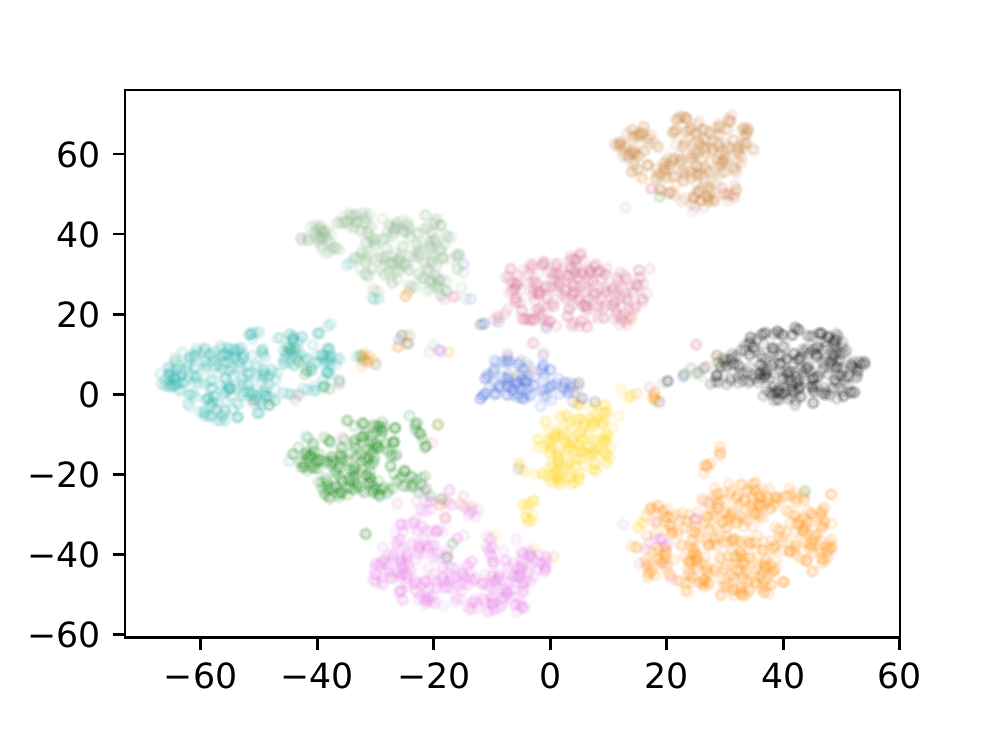}
		\end{minipage}
		\label{protos_fedavg}
	} 
	\subfigure[\algfont{FeSEM}]{
		\begin{minipage}[b]{0.235\textwidth}
			\includegraphics[width=1\textwidth]{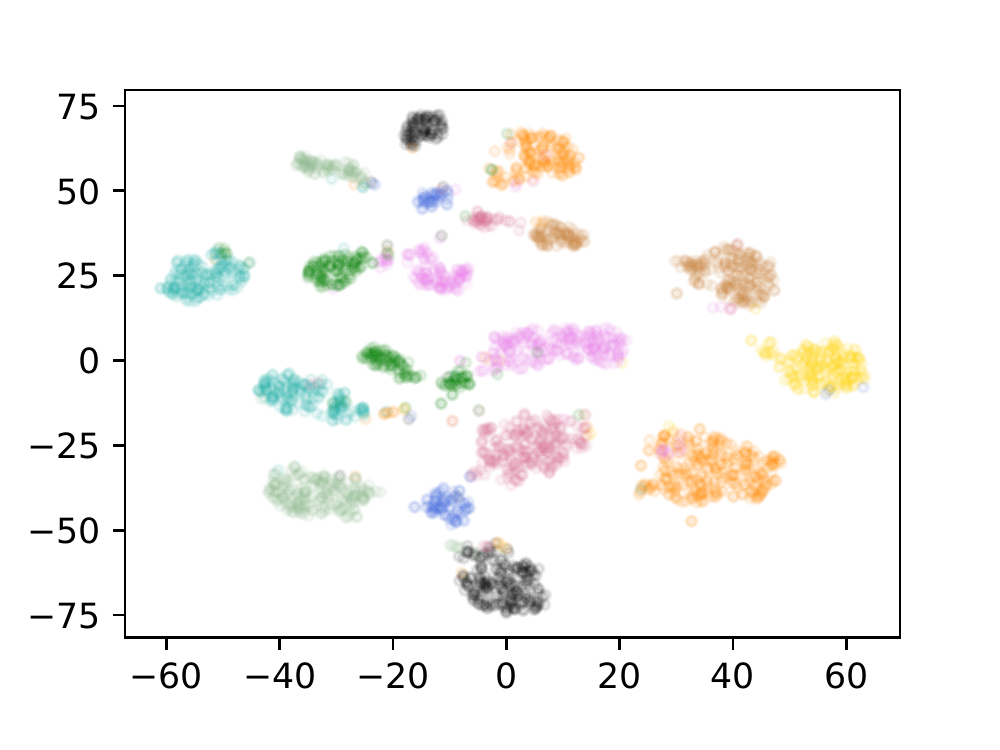}
		\end{minipage}
		\label{protos_fedsem}
	}
	\subfigure[\algfont{FedPer}]{
		\begin{minipage}[b]{0.235\textwidth}
			\includegraphics[width=1\textwidth]{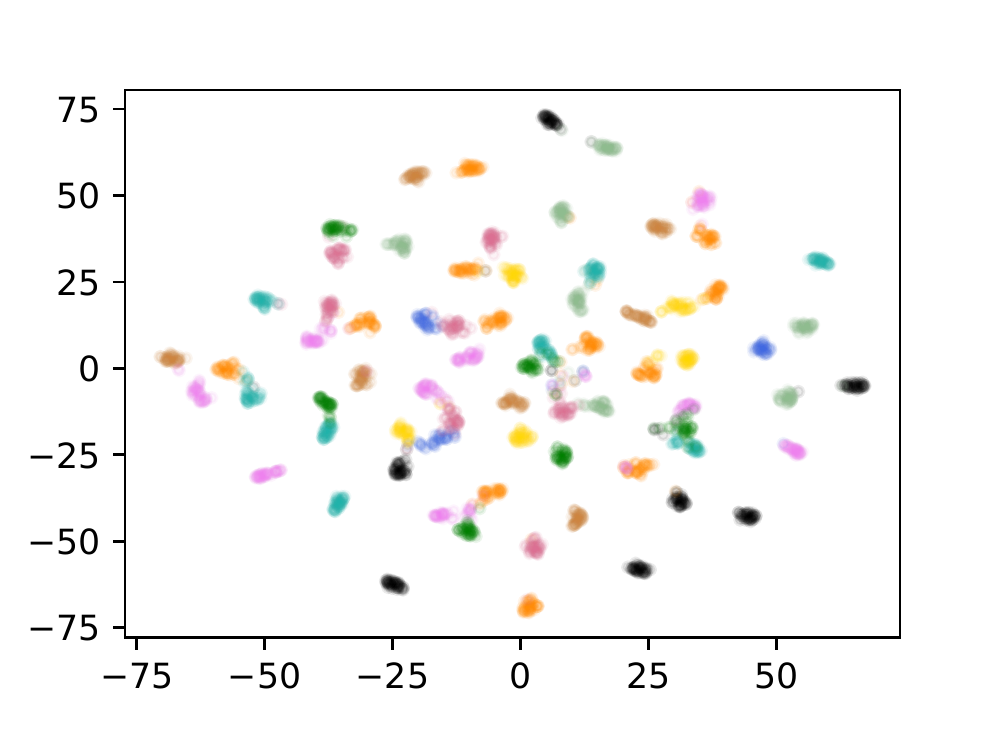}
		\end{minipage}
		\label{protos_fedper}
	}
	\caption{t-SNE visualization of the samples and/or prototypes produced by \algfont{FedProto} and other FL methods. We consider 20 clients for MNIST. The average number of classes per client is $n=3$. (a) \algfont{FedProto}: Samples within the same class have multiple centers with each center representing local prototype of a client. The global prototype is the central point of the class samples. (b) \algfont{FedAvg}: Samples within the same class cluster in the same area. (c) \algfont{FeSEM}: Samples within the same class gather in several clusters according to the algorithm parameters. (d) \algfont{FedPer}: Each cluster indicates one class in a specific client.}
	\label{vis}
\end{figure*}

\paragraph{Implementation Details}
We implement \algfont{FedProto} and the baseline methods in PyTorch. We use 20 clients for all datasets and all clients are sampled in each communication round. The average size of each class in each client is set to be 100. For MNIST and FEMNIST dataset, our initial set of hyperparameters was taken directly from the default set of hyperparamters in \cite{mcmahan2016communication}. For CIFAR10, ResNet18 pre-trained on ImageNet \cite{krizhevsky2017imagenet} is used as the initial model. The initial average test accuracy of the pre-trained network on CIFAR10 is 27.55$\%$. A detailed setup including the choice of hyperparameters is given in Appendix A.
\vspace{-0.25cm}

\subsection{Performance in Non-IID Federated Setting}
\vspace{-0.15cm}
We compare \algfont{FedProto} with other baseline methods that are either classical FL methods or FL methods with an emphasis on statistical heterogeneity. All methods are adapted to fit this heterogeneous setting.
\vspace{-0.2cm}

\paragraph{Statistical heterogeneity simulations} In our setting, we assume that all clients perform learning tasks with heterogeneous statistical distributions. In order to simulate different levels of heterogeneity, we fix the standard deviation to be 1 or 2, aiming to create heterogeneity in both class spaces and data sizes, which is common in real-world scenarios. 
\vspace{-0.2cm}

\paragraph{Model heterogeneity simulations} For the model heterogeneous setting, we consider minor differences in model architectures across clients. In MNIST and FEMNIST, the number of output channels in the convolutional layers is set to either 18, 20 or 22, while in CIFAR10, the stride of convolutional layers is set differently across different clients. This kind of model heterogeneity brings about challenges for model parameter averaging because the parameters in different clients are not always the same size.

The average test accuracy over all clients is reported in Table \ref{perf}. It can be seen that \algfont{FedProto} achieves the highest accuracy and the least variance in most cases, ensuring uniformity among heterogeneous clients.
\vspace{-0.15cm}

\paragraph{Communication efficiency} Communication costs have always been posed as a challenge in FL, considering several limitations in existing communication channels. Therefore, we also report the number of communication rounds required for convergence and the number of parameters communicated per round in Table \ref{perf}. It can be seen that the number of parameters communicated per round in \algfont{FedProto} is much lower than in the case of \algfont{FedAvg}. Furthermore, \algfont{FedProto} requires the fewest communication rounds for the local optimization. This suggests that when the heterogeneity level is high across the clients, sharing more parameters does not always lead to better results. It is more important to identify which part to share in order to benefit the current system to a great extent. More performance results are shown in Appendix A.
\vspace{-0.15cm}

\paragraph{Visualization of prototypes achieved by \algfont{FedProto}}
We visualize the samples in MNIST test set by t-SNE~\cite{van2008visualizing}. In Figure \ref{protos_fedproto}, small points in different colors represent samples in different classes, with large points representing corresponding global prototypes. In Figure \ref{protos_fedavg}, \ref{protos_fedsem} and \ref{protos_fedper}, the points in different colors refer to the representations of samples belonging to different classes. Better generalization means that there are more samples within the same class cluster in the same area, which can be achieved in a centralized setting, while better personalization means that it is easier to determine to which client the samples belong. It can be seen that samples within the same class but from various clients are close but separable in \algfont{FedProto}. This indicates that \algfont{FedProto} is more successful in achieving the balance between generalization and personalization, while other methods lacks either the generalization or the personalization ability.\looseness-1
\vspace{-0.2cm}

\paragraph{Scalability of \algfont{FedProto} on varying number of samples}
\begin{figure}[htbp!]
	\centering
	\setlength{\abovecaptionskip}{-0.2cm}
	\setlength{\belowcaptionskip}{-0.6cm}
	\includegraphics[width=0.35\textwidth]{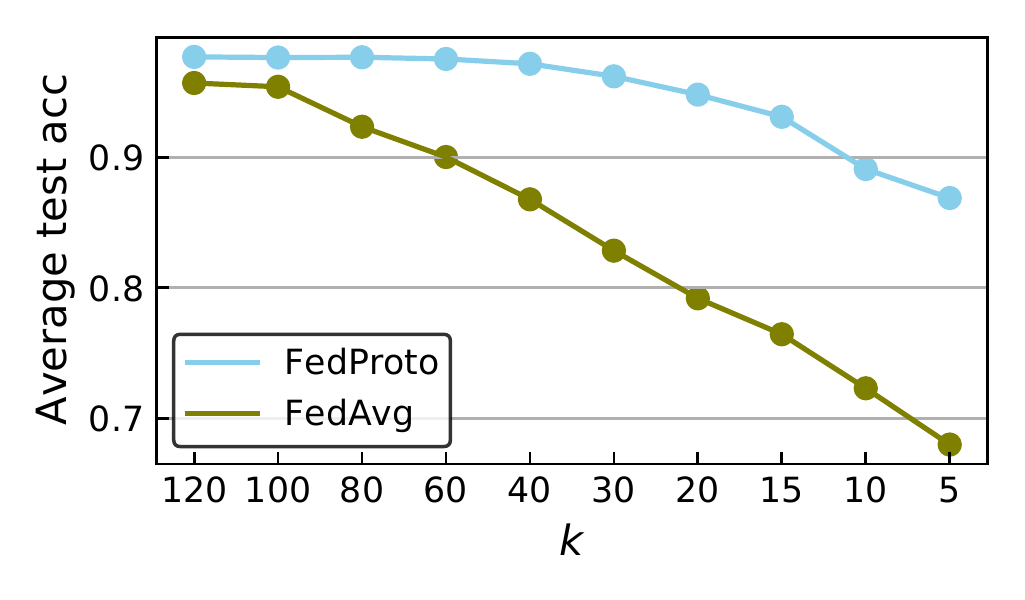}
	\caption{Average test accuracy of \algfont{FedProto} and \algfont{FedAvg} on MNIST with varying number of samples in each class.}
	\label{k_acc}
\end{figure}

Figure \ref{k_acc} shows that \algfont{FedProto} can scale to scenarios with fewer samples available on clients. The test accuracy consistently decreases when there are fewer samples for training, but \algfont{FedProto} drops more slowly than \algfont{FedAvg} as a result of its adaptability and scalability on various data sizes. %\looseness-1
\vspace{-0.2cm}

\paragraph{\algfont{FedProto} under varying $\lambda$}
Figure \ref{lambda} shows the varying performance under different values of $\lambda$ in Eq. (\ref{eq:lambda}). We tried a set of values selected from $[0,4]$ and reported the average test accuracy and proto distance loss with $n$=3, $k$=100 in FEMNIST dataset. The best value of $\lambda$ is $1$ in this scenario. As $\lambda$ increases, the proto distance loss (regularization term) decreases, while the average test accuracy experiences a sharp rise from $\lambda$=0 to $\lambda$=1 before a drop in the number of 6$\%$, demonstrating the efficacy of prototype aggregation. \looseness-1
\vspace{-0.1cm}

\begin{figure}[htbp!]
	\centering
	\setlength{\abovecaptionskip}{-0.2cm}
	\setlength{\belowcaptionskip}{-0.5cm}
	\includegraphics[width=0.4\textwidth]{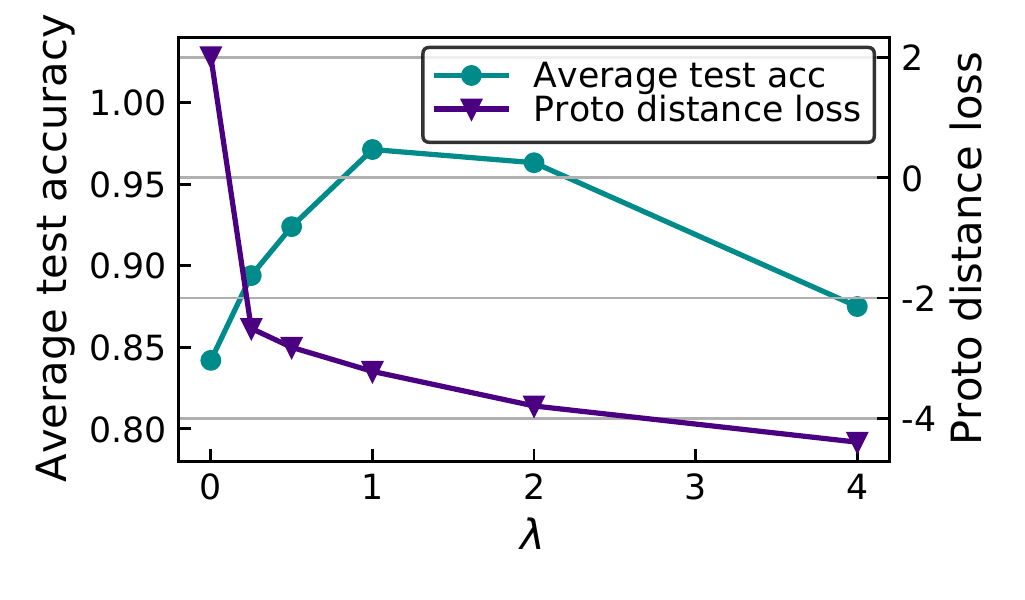}
	\caption{Average test accuracy on FEMNIST under varying importance weight $\lambda$.}
	\label{lambda}
\end{figure}
\vspace{-0.2cm}

\section{Conclusion} \label{sec:conclusion}
In this paper, we propose a novel prototype aggregation-based FL method to tackle challenging FL scenarios with heterogeneous input/output spaces, data distributions, and model architectures. The proposed method collaboratively trains intelligent models by exchanging prototypes rather than gradients, which offers new insights for designing prototype-based FL. The effectiveness of the proposed method has been comprehensively analyzed from both theoretical and experimental perspectives.  \looseness-1 

\clearpage
\bibliography{aaai22_fedproto}

\clearpage

% \title{Supplementary Materials}
% \maketitle

\onecolumn
\noindent We present the related supplements in following sections.

\section{Experimental Details and Extra Results} \label{appendix_a}
\subsection{Experimental Details}

Local clients are trained by SGD optimizer, with a learning rate of $0.01$ and momentum of $0.5$.
Regarding the crucial hyperparameter $\lambda$, we tune the best $\lambda$ from a limited candidate set by \textit{grid search}. The best $\lambda$ values for MNIST, FEMNIST and CIFAR10 are $1$, $1$ and $0.1$, respectively. The number of local epochs and local batch size are set to be 1 and 8, respectively, for all datasets. The heterogeneity level of clients is controlled by the standard deviation of $ n $. The higher this is, the more heterogeneous the clients are.

\subsection{Extra Results}
The complete experimental results show the performance of \algfont{FedProto} and \algfont{FedProto-mh} on three benchmark datasets MNIST, FEMNIST, and CIFAR10. Compared with existing FL methods, \algfont{FedProto} yields higher test accuracy while resulting in lower communication costs under different heterogeneous settings. Additionally, it can be used in model heterogeneous scenarios and achieves performance similar to that in homogeneous scenarios.

\vspace{0.2cm}

\noindent For MNIST, we evaluate local test sets and report the evaluation results in Table \ref{acc_mnist}. It appears that \algfont{FedProto} achieves strong performance with low communication cost. The local average test accuracy of \algfont{FedProto} is greater than for the \algfont{FeSEM}, \algfont{FedProx}, \algfont{FedPer} and \algfont{FedAvg} algorithms in all the settings. 

\vspace{0.2cm}

\noindent For FEMNIST, the evaluation results are reported in Table \ref{acc_femnist}. We consider the standard deviation of $ n $ to be 1 and 2. The results show that, for \algfont{FedProto}, the variance of the accuracy across clients is much smaller than for other FL methods, thus ensuring uniformity among heterogeneous clients. \algfont{FedProto} allows us to better utilize the local FEMNIST dataset distribution while using around $ 0.025\% $ of the total parameters communicated.

\vspace{0.2cm}

\noindent For CIFAR10, as can be seen in Table \ref{acc_cifar10}, \algfont{FedProto} converges faster in the presence of heterogeneity in most cases. In \algfont{FedProto} and \algfont{FedProto-mh}, the number of parameters communicated per round is much lower than the baseline methods, meaning greatly reduced communication costs.

\begin{table*}[htbp!]
	\centering
	\begin{tabular}{cccccccc}
	\toprule
	\multirow{2}*{\textbf{Dataset}} & \multirow{2}*{\textbf{Method}} & \multirow{2}*{\begin{tabular}[c]{@{}c@{}}\textbf{Stdev} \\ \textbf{of} $n$ \end{tabular}} & \multicolumn{3}{c} {\textbf{Test Average Acc}} & \multirow{2}*{\begin{tabular}[c]{@{}c@{}}\textbf{\# of Comm} \\ \textbf{Rounds} \end{tabular}} & \multirow{2}*{\begin{tabular}[c]{@{}c@{}}\textbf{\# of Comm} \\ \textbf{Params} ($\times 10^3$) \end{tabular}} \\
	\cmidrule{4-6}
	& & & \textbf{$ n=3 $} & \textbf{$ n=4 $} & \textbf{$ n=5 $} & & \\
	\midrule
	\multirow{16}*{\textbf{MNIST}} & \multirow{2}*{\algfont{Local}} & 2 & 94.05$\pm$2.93 & 93.35$\pm$3.26 & 	92.92$\pm$3.17 & \multirow{2}*{0} & \multirow{2}*{0} \\
	& & 3 & 93.44$\pm$3.57 & 94.24$\pm$2.49	 & 93.97$\pm$2.97 \\
	\cmidrule{4-8}
	& \multirow{2}*{\algfont{FeSEM}} & 2 & 95.26$\pm$3.48 & \textbf{97.06$\pm$2.72} & 96.31$\pm$2.41 & \multirow{2}*{150} & \multirow{2}*{430} \\
	& & 3 & 96.40$\pm$3.35 & 95.82$\pm$3.94 & 95.98$\pm$2.46 \\
	\cmidrule{4-8}
	& \multirow{2}*{\algfont{FedProx}} & 2 & 96.26$\pm$2.89 & 96.40$\pm$3.33 & 95.65$\pm$3.38 & \multirow{2}*{110} & \multirow{2}*{430} \\
	& & 3 & 96.65$\pm$3.28 & 95.25$\pm$3.73 & 95.34$\pm$2.85 \\
	\cmidrule{4-8}
	& \multirow{2}*{\algfont{FedPer}} & 2 & 95.57$\pm$2.96 & 96.44$\pm$2.62 & 95.55$\pm$3.13 & \multirow{2}*{100} & \multirow{2}*{106} \\
	& & 3 & 96.57$\pm$2.65 & 95.93$\pm$2.76 & 96.07$\pm$2.80 \\
	\cmidrule{4-8}
	& \multirow{2}*{\algfont{FedAvg}} & 2 & 91.40$\pm$6.48 & 94.32$\pm$4.89 & 93.22$\pm$4.39 & \multirow{2}*{150} & \multirow{2}*{430} \\
	& & 3 & 94.57$\pm$4.91 & 91.99$\pm$6.89 & 92.19$\pm$3.97 \\
	\cmidrule{4-8}
	& \multirow{2}*{\algfont{FedRep}} & 2 & 94.96$\pm$2.78 & 95.18$\pm$3.80	 & 94.94$\pm$2.81 & \multirow{2}*{100} & \multirow{2}*{110} \\
	& & 3 & 95.01$\pm$3.92 & 95.55$\pm$2.79 & 95.38$\pm$2.97 \\
	\cmidrule{4-8}
	& \multirow{2}*{\textbf{\algfont{FedProto}}} & 2 & \textbf{97.13$\pm$0.30} & 96.80$\pm$0.41 & \textbf{96.70$\pm$0.29} & \multirow{2}*{\textbf{100}} & \multirow{2}*{\textbf{4}} \\
	& & 3 & \textbf{96.71$\pm$0.43} & \textbf{96.87$\pm$0.28} & \textbf{96.47$\pm$0.23} \\
	\cmidrule{4-8}
	& \multirow{2}*{\textbf{\algfont{FedProto-mh}}} & 2 & 97.07$\pm$0.50 & 96.65$\pm$0.31 & 96.22$\pm$0.36 & \multirow{2}*{\textbf{100}} & \multirow{2}*{\textbf{4}} \\
	& & 3 & 96.48$\pm$0.43 & 96.84$\pm$0.33 & 95.56$\pm$0.31 \\
	\bottomrule
	\end{tabular}
	\caption{Comparison of FL methods on MNIST with non-IID split over clients. Best results are in \textbf{bold}. It appears that \algfont{FedProto}, compared to the baseline methods, achieves higher accuracy while using much less communicated parameters.}
	\label{acc_mnist}
\end{table*}

\begin{table*}[htbp!]
	\centering
	\begin{tabular}{cccccccc}
	\toprule
	\multirow{2}*{\textbf{Dataset}} & \multirow{2}*{\textbf{Method}} & \multirow{2}*{\begin{tabular}[c]{@{}c@{}}\textbf{Stdev} \\ \textbf{of} $n$ \end{tabular}} & \multicolumn{3}{c} {\textbf{Test Average Acc}} & \multirow{2}*{\begin{tabular}[c]{@{}c@{}}\textbf{\# of Comm} \\ \textbf{Rounds} \end{tabular}} & \multirow{2}*{\begin{tabular}[c]{@{}c@{}}\textbf{\# of Comm} \\ \textbf{Params} ($\times 10^3$) \end{tabular}} \\
	\cmidrule{4-6}
	& & & \textbf{$ n=3 $} & \textbf{$ n=4 $} & \textbf{$ n=5 $} & & \\
	\midrule
	\multirow{16}*{\textbf{FEMNIST}} & \multirow{2}*{\algfont{Local}} & 1 & 92.50$\pm$10.42 & 91.16$\pm$5.64 & 87.91$\pm$8.44 & \multirow{2}*{0} & \multirow{2}*{0} \\
	& & 2 & 92.11$\pm$6.02 & 90.34$\pm$6.42 & 89.70$\pm$6.33 \\
	\cmidrule{4-8}
	& \multirow{2}*{\algfont{FeSEM}} & 1 & 93.39$\pm$6.75 & 91.06$\pm$6.43 & 89.61$\pm$7.89 & \multirow{2}*{200} & \multirow{2}*{16,000} \\
	& & 2 & 94.19$\pm$4.90 & 93.52$\pm$4.47 & 90.77$\pm$6.70 \\
	\cmidrule{4-8}
	& \multirow{2}*{\algfont{FedProx}} & 1 & 94.53$\pm$5.33 & 90.71$\pm$6.24 & 91.33$\pm$7.32 & \multirow{2}*{300} & \multirow{2}*{16,000} \\
	& & 2 & 93.49$\pm$5.30 & 93.74$\pm$5.02 & 89.49$\pm$6.74 \\
	\cmidrule{4-8}
	& \multirow{2}*{\algfont{FedPer}} & 1 & 93.47$\pm$5.44 & 90.22$\pm$7.63 & 87.73$\pm$9.64 & \multirow{2}*{250} & \multirow{2}*{102} \\
	& & 2 & 92.27$\pm$6.16 & 91.99$\pm$6.33 & 87.54$\pm$8.14 \\
	\cmidrule{4-8}
	& \multirow{2}*{\algfont{FedAvg}} & 1 & 94.50$\pm$5.29 & 91.39$\pm$5.23 & 90.95$\pm$7.22 & \multirow{2}*{300} & \multirow{2}*{16,000} \\
	& & 2 & 94.13$\pm$4.92 & 93.02$\pm$5.77 & 89.80$\pm$6.94 \\
	\cmidrule{4-8}
	& \multirow{2}*{\algfont{FedRep}} & 1 & 93.36$\pm$5.34 & 91.41$\pm$5.89	 & 89.98$\pm$6.88 & \multirow{2}*{200} & \multirow{2}*{102} \\
	& & 2 & 92.28$\pm$5.40 & 91.56$\pm$7.02 & 88.23$\pm$6.97 \\
	\cmidrule{4-8}
	& \multirow{2}*{\textbf{\algfont{FedProto}}} & 1 & 96.82$\pm$1.75 & \textbf{94.93$\pm$1.61} & 93.67$\pm$2.23 & \multirow{2}*{\textbf{120}} & \multirow{2}*{\textbf{4}} \\
	& & 2 & 94.93$\pm$1.29 & 94.69$\pm$1.50 & \textbf{93.03$\pm$2.50} \\
	\cmidrule{4-8}
	& \multirow{2}*{\textbf{\algfont{FedProto-mh}}} & 1 & \textbf{97.10$\pm$1.63} & 94.83$\pm$1.60 & \textbf{93.76$\pm$2.30} & \multirow{2}*{\textbf{120}} & \multirow{2}*{\textbf{4}} \\
	& & 2 & \textbf{95.33$\pm$1.30} & \textbf{94.98$\pm$1.69} & 92.94$\pm$2.34 \\
	\midrule
	\end{tabular}
	\caption{Comparison of FL methods on FEMNIST with non-IID split over clients. Best results are in \textbf{bold}. It appears that \algfont{FedProto}, compared to the baseline methods, achieves higher accuracy while using much fewer communicated parameters.}
	\label{acc_femnist}
\end{table*}

\begin{table*}[htbp!]
	\centering
	\begin{tabular}{cccccccc}
	\toprule
	\multirow{2}*{\textbf{Dataset}} & \multirow{2}*{\textbf{Method}} & \multirow{2}*{\begin{tabular}[c]{@{}c@{}}\textbf{Stdev} \\ \textbf{of} $n$ \end{tabular}} & \multicolumn{3}{c} {\textbf{Test Average Acc}} & \multirow{2}*{\begin{tabular}[c]{@{}c@{}}\textbf{\# of Comm} \\ \textbf{Rounds} \end{tabular}} & 
	\multirow{2}*{\begin{tabular}[c]{@{}c@{}}\textbf{\# of Comm} \\ \textbf{Params} ($\times 10^4$) \end{tabular}} \\
	\cmidrule{4-6}
	& & & \textbf{$ n=3 $} & \textbf{$ n=4 $} & \textbf{$ n=5 $} & & \\
	\midrule
	\multirow{16}*{\textbf{CIFAR10}} & \multirow{2}*{\algfont{Local}} & 1 & 79.72$\pm$9.45 & 67.62$\pm$7.15 & 58.64$\pm$6.57  & \multirow{2}*{0} & \multirow{2}*{0} \\
	& & 2  & 68.15$\pm$9.88	 & 61.03$\pm$11.83 & 58.81$\pm$12.90\\
	\cmidrule{4-8}
	& \multirow{2}*{\algfont{FeSEM}} & 1 & 80.19$\pm$3.31 & 76.40$\pm$3.23 & 74.17$\pm$3.51  & \multirow{2}*{120} & \multirow{2}*{2.35$\times10^4$} \\
	& & 2  & 76.12$\pm$4.15 & 72.11$\pm$3.48 & 70.89$\pm$3.39\\
	\cmidrule{4-8}
	& \multirow{2}*{\algfont{FedProx}} & 1 &  83.25$\pm$2.44 & 79.20$\pm$1.31 & 76.19$\pm$2.23  & \multirow{2}*{150} & \multirow{2}*{2.35$\times10^4$} \\
	& & 2 & 79.83$\pm$2.35 & 72.56$\pm$1.90 & 71.39$\pm$2.36 \\
	\cmidrule{4-8}
	& \multirow{2}*{\algfont{FedPer}} & 1 &  84.38$\pm$4.58 & 78.73$\pm$4.59 & 76.21$\pm$4.27  & \multirow{2}*{130} & \multirow{2}*{2.25$\times10^4$} \\
    & & 2 & \textbf{84.51$\pm$4.39} & 73.31$\pm$4.76 & 72,43$\pm$4.55 \\
	\cmidrule{4-8}
	& \multirow{2}*{\algfont{FedAvg}} & 1 &  81.72$\pm$2.77 & 76.77$\pm$2.37 & 75.74$\pm$2.61  & \multirow{2}*{150} & \multirow{2}*{2.35$\times10^4$} \\
    & & 2 & 78.99$\pm$2.34 & 72.73$\pm$2.58 & 70.93$\pm$2.82 \\
	\cmidrule{4-8}
	& \multirow{2}*{\algfont{FedRep}} & 1 &  81.44$\pm$10.48 & 76.93$\pm$7.46	 & 73.36$\pm$7.04  & \multirow{2}*{110} & \multirow{2}*{2.25$\times10^4$} \\
    & & 2 & 76.70$\pm$11.79 & 73.54$\pm$11.42 & 70.30$\pm$8.00 \\
	\cmidrule{4-8}
	& \multirow{2}*{\textbf{\algfont{FedProto}}} & 1 &  \textbf{84.49$\pm$1.97} & 79.12$\pm$2.03 & \textbf{77.08$\pm$1.98}  & \multirow{2}*{\textbf{110}} & \multirow{2}*{\textbf{4.10}} \\
    & & 2 & 81.75$\pm$1.39 & 74.98$\pm$1.61 & 71.17$\pm$1.29\\
	\cmidrule{4-8}
	& \multirow{2}*{\textbf{\algfont{FedProto-mh}}} & 1 &  83.63$\pm$1.60 & \textbf{79.49$\pm$1.78} & 76.94$\pm$1.33  & \multirow{2}*{\textbf{110}} & \multirow{2}*{\textbf{4.10}} \\
	& & 2 & 79.90$\pm$1.08 & \textbf{75.78$\pm$1.05} & \textbf{72.67$\pm$1.09}\\
	\bottomrule
	\end{tabular}
	\caption{Comparison of FL methods on CIFAR10 with non-IID split over clients. Best results are in \textbf{bold}. It appears that \algfont{FedProto}, compared to the baseline methods, achieves higher accuracy while using much fewer communicated parameters.}
	\label{acc_cifar10}
\end{table*}

\clearpage

\section{Convergence Analysis for \algfont{FedProto}} \label{appendix_b}
\subsection{Additional Notation}
Here, additional variables are introduced to better represent the process of local model update.
Let $f_i(\phi_i): {\Bbb R}^{d_x} \rightarrow {\Bbb R}^{d_c}$ be the embedding function of the $i$-th client, which can be different regarding to different clients. $d_x$ and $d_c$ represent the dimension of the input $x$ and the prototype, respectively. They should be the same for all clients.
$g_i(\nu_i): {\Bbb R}^{d_c} \rightarrow {\Bbb R}^{d_y}$ is the decision function for all clients, in which $d_y$ represents the dimension of output $y$. So the labelling function can be written as $\mathcal{F}_i(\phi_i, \nu_i) = g_i(\nu_i) \circ f_i(\phi_i)$, and sometimes we use $\omega_i$ to represent $(\phi_i, \nu_i)$ for short.
In the theoretical analysis, we omit the label $(j)$ of prototype $C^{(j)}$
for convenience, which does not affect the proof. We also use $q_i$ to represent the weight of the prototype for $i$-th client, and $p_i$ to represent the weight of the loss function for the $i$-th client for short.

\noindent Therefore, the local loss function of client $i$ can be written as:
\setcounter{equation}{0}
\begin{equation} \label{eq:local_re}
\mathcal{L}(\phi_i, \nu_i; x,y) = \mathcal{L}_S (\mathcal{F}_i(\phi_i, \nu_i;x),y)+ \lambda \|f_i(\phi_i;x) - \bar{C} \|_2^2, 
\end{equation}

\noindent in which the global prototype 
\begin{equation} \label{eq:global_proto}
\bar{C} = \sum_{i=1}^m q_i C_i %= \sum_{i=1}^m q_i \bar{f}_i(\phi_i;x_i)
\end{equation}

\noindent with 
\begin{equation}
\sum_{i=1}^m q_i = \sum_{i=1}^m \frac{|D_i|}{N}=1
\end{equation}
\noindent and 
\begin{equation} \label{eq:local_proto}
C_i = \frac{1}{|D_i|} \sum_{(x,y) \in D_i} f_i(\phi_i;x), 
\end{equation}

\noindent and it is a constant in $\mathcal{L}$, changing $\mathcal{L}$ every communication round, which makes the convergence analysis complex.

\noindent As for the iteration notation system, we use $t$ to represent the communication round, $e \in \{1/2,1,2,\dots,E$\} to represent the local iterations. There are $E$ local iterations in total, so $tE+e$ refers to the $e$-th local iteration in the communication round $t+1$. Moreover, $tE$ represents the time step before prototype aggregation at the server, and $tE+1/2$ represents the time step between prototype aggregation at the server and starting the first iteration on the local model.

\subsection{Assumptions}
\setcounter{assumption}{0}
\begin{assumption}\label{as1}
{\rm(Lipschitz Smooth).} Each local objective function is $L_1$-Lipschitz smooth, which also means the gradient of local objective function is $L_1$-Lipschitz continuous,
\begin{equation}
\small
\begin{aligned}
\|\nabla \mathcal{L}_{{t_1}}&-\nabla \mathcal{L}_{{t_2}}\|_2 \leq L_1\| \omega_{{i,t_1}}  -\omega_{{i,t_2}} \|_2, \\
&\forall t_1, t_2 >0, i \in \{1,2,\dots, m\},   
\end{aligned}
\end{equation}

\noindent which implies the following quadratic bound,
\begin{equation}
\begin{aligned}
\mathcal{L}_{{t_1}} - \mathcal{L}_{{t_2}} &\leq \langle \nabla \mathcal{L}_{{t_2}}, (\omega_{{i,t_1}}  -\omega_{{i,t_2}})\rangle + \frac{L_1}{2} \|  \omega_{{i,t_1}}  -\omega_{{i,t_2}} \|_2^2, \quad\forall t_1, t_2 >0, i \in \{1,2,\dots, m\}.
\end{aligned}
\end{equation}
\end{assumption}

\begin{assumption} \label{as2}
{\rm(Unbiased Gradient and Bounded Variance).} The stochastic gradient $g_{i,t} = \nabla \mathcal{L}(\omega_{i,t}, \xi_{t}) $ is an unbiased estimator of the local gradient for each client. Suppose its expectation 
\begin{equation}
\begin{aligned}
{\Bbb E}_{\xi_i \sim D_i}{[}g_{i,t}{]} = \nabla \mathcal{L}(\omega_{i,t}) = \nabla \mathcal{L}_{t}, \quad \forall i \in {1,2,\dots, m},
\end{aligned}
\end{equation}
\noindent and its variance is bounded by $\sigma^2$: 
\begin{equation}
\begin{aligned}
{\Bbb E}{[}{\| g_{i,t} - \nabla \mathcal{L}(\omega_{i,t}) \|}_2^2{]}\leq \sigma^2, \quad \forall i \in \{1,2,\dots, m\}, \sigma^2 \geq 0.
\end{aligned}
\end{equation}
\end{assumption}

\begin{assumption} \label{as3}
{\rm(Bounded Expectation of Euclidean norm of Stochastic Gradients).}The expectation of the stochastic gradient is bounded by $G$:
\begin{equation}
{\Bbb E}{[} \| g_{i,t} \|_2{]} \leq G, \quad \forall i \in \{1,2,\dots, m\}.
\end{equation}
\end{assumption}

\begin{assumption}\label{as4}
{\rm(Lipschitz Continuity).} 
Each local embedding function is $L_2$-Lipschitz continuous, that is,
\begin{equation}
\left\| f_i(\phi_{i,t_1})- f_i(\phi_{i,t_2})\right\| \leq L_2\| \phi_{i,t_1}  -\phi_{i,t_2} \|_2, \quad\forall t_1, t_2 >0, i \in \{1,2,\dots, m\}.
\end{equation}
\end{assumption}

\noindent Assumption \ref{as4} is a little strong, but we only use it in a very narrow domain with width of $E$ steps of SGD in Lemma \ref{lemma2}.

\subsection{Key Lemmas}
\setcounter{lemma}{0}
\begin{lemma} \label{lemma1}
Let Assumption \ref{as1} and \ref{as2} hold. From the beginning of communication round $t+1$ to the last local update step, the loss function of an arbitrary client can be bounded as:
\begin{equation}
\begin{split}
{\Bbb E} {[}\mathcal{L}_{(t+1)E}{]} \leq \mathcal{L}_{tE+1/2} -(\eta-\frac{L_1 \eta^2}{2}) \sum_{e=1/2}^{E-1} \|\nabla \mathcal{L}_{tE+e} \|_2^2+ \frac{L_1 E \eta^2}{2} \sigma^2.
\end{split}
\end{equation}
\end{lemma}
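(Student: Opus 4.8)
The plan is to run the standard stochastic-gradient descent lemma \emph{within} a single communication round, exploiting the fact that the global prototype $\bar{C}$ in Eq.~\eqref{eq:local_re} is held fixed between successive prototype-aggregation steps. Throughout round $t+1$ the local objective $\mathcal{L}$ is therefore a \emph{fixed} function of $\omega_i$, so Assumptions~\ref{as1} and~\ref{as2} apply to it verbatim and the only randomness comes from the stochastic gradients $g_{i,tE+e}$. I would first establish a one-step descent inequality, then take expectations using unbiasedness and the variance bound, and finally telescope across the $E$ local updates of the round.

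For one local SGD step, $\omega_{i,tE+e} \mapsto \omega_{i,tE+e} - \eta\, g_{i,tE+e}$, I substitute the increment $-\eta\, g_{i,tE+e}$ into the quadratic bound of Assumption~\ref{as1} to obtain, for the loss at the resulting iterate (with the convention that the first step advances the post-aggregation index $1/2$ to $1$),
\begin{equation}
\mathcal{L}_{tE+e+1} \leq \mathcal{L}_{tE+e} - \eta \langle \nabla \mathcal{L}_{tE+e}, g_{i,tE+e}\rangle + \frac{L_1 \eta^2}{2} \| g_{i,tE+e} \|_2^2 .
\end{equation}
Conditioning on the iterate $\omega_{i,tE+e}$ and taking expectation over $\xi$, Assumption~\ref{as2} turns the inner-product term into $\|\nabla \mathcal{L}_{tE+e}\|_2^2$ by unbiasedness, and bounds the second moment through the bias--variance split $\mathbb{E}\| g_{i,tE+e}\|_2^2 = \|\nabla \mathcal{L}_{tE+e}\|_2^2 + \mathbb{E}\| g_{i,tE+e} - \nabla \mathcal{L}_{tE+e}\|_2^2 \leq \|\nabla \mathcal{L}_{tE+e}\|_2^2 + \sigma^2$, giving the per-step decrease
\begin{equation}
\mathbb{E}[\mathcal{L}_{tE+e+1}] \leq \mathcal{L}_{tE+e} - \left(\eta - \frac{L_1 \eta^2}{2}\right)\|\nabla \mathcal{L}_{tE+e}\|_2^2 + \frac{L_1 \eta^2}{2}\sigma^2 .
\end{equation}

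It then remains to sum this inequality over the $E$ transitions indexed by $e \in \{1/2, 1, 2, \ldots, E-1\}$, i.e.\ from the post-aggregation iterate $tE+1/2$ to the final iterate $(t+1)E$, and to use the tower property to discharge the intermediate conditioning. The loss values telescope down to $\mathcal{L}_{tE+1/2}$, the gradient-norm terms accumulate into $\sum_{e=1/2}^{E-1}\|\nabla\mathcal{L}_{tE+e}\|_2^2$, and each of the $E$ steps contributes one copy of $\frac{L_1\eta^2}{2}\sigma^2$, producing the claimed $\frac{L_1 E \eta^2}{2}\sigma^2$ and hence exactly the stated bound.

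I expect no genuine obstacle, only care-points: (i) handling the fractional index $1/2$ and verifying that $\{1/2,1,\dots,E-1\}$ contains precisely $E$ elements, so that the variance term correctly acquires the factor $E$; and (ii) justifying the passage from per-step \emph{conditional} expectations to the unconditional bound by iterated expectation, which presumes each stochastic gradient is independent of the history given the current iterate. The conceptually important point to flag is that the whole argument is valid only \emph{inside} a round, since $\bar{C}$ (and thus $\mathcal{L}$) jumps at aggregation time; that cross-round jump is precisely what Theorem~\ref{th1} must subsequently absorb via Assumption~\ref{as4}, and it deliberately does not enter the present lemma.
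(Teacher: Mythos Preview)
Your proposal is correct and follows essentially the same route as the paper's own proof: apply the $L_1$-smooth quadratic bound to one SGD step, take expectation using unbiasedness and the variance bound of Assumption~\ref{as2} (via the same bias--variance split), and telescope over the $E$ local updates from $tE+1/2$ to $(t+1)E$. Your additional remarks about $\bar C$ being frozen within a round and the use of iterated expectation are accurate and in fact make explicit points the paper leaves implicit.
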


\begin{proof}
Due to the fact that this lemma is for an arbitrary client, so client notation $i$ is omitted. Let $\omega_{t+1} = \omega_{t} - \eta g_{t}$, then
\begin{equation}
\begin{split}
\mathcal{L}_{tE+1} & \stackrel{(a)}{\leq} \mathcal{L}_{tE+1/2} + \langle \nabla \mathcal{L}_{tE+1/2}, (\omega_{tE+1}  -\omega_{tE+1/2})\rangle + \frac{L_1}{2} \|  \omega_{tE+1}  -\omega_{tE+1/2} \|_2^2\\
&= \mathcal{L}_{tE+1/2}-\eta \langle \nabla \mathcal{L_1}_{tE+1/2}, g_{tE+1/2}\rangle + \frac{L_1}{2} \|  \eta g_{tE+1/2} \|_2^2,
\end{split}
\end{equation}
\noindent where (a) follows from the quadratic $L_1$-Lipschitz smooth bound in Assumption \ref{as1}. Taking expectation of both sides of the above equation on the random variable $\xi_{tE+1/2}$, we have
\begin{align}
{\Bbb E} {[}\mathcal{L}_{tE+1}{]} & \leq \mathcal{L}_{tE+1/2}-\eta {\Bbb E}{[}\langle \nabla \mathcal{L_1}_{tE+1/2},  g_{tE+1/2}\rangle{]} + \frac{L_1 \eta^2}{2} {\Bbb E}{[}\|g_{tE+1/2}\|_2^2{]}\\
&  \stackrel{(b)}{=} \mathcal{L}_{tE+1/2}-\eta \|\nabla \mathcal{L}_{tE+1/2} \|_2^2 + \frac{L_1 \eta^2}{2} {\Bbb E}{[}\|g_{i,tE+1/2}\|_2^2{]}\\
&  \stackrel{(c)}{\leq} \mathcal{L}_{tE+1/2}-\eta \|\nabla \mathcal{L}_{tE+1/2} \|_2^2 + \frac{L_1 \eta^2}{2} (\|\nabla \mathcal{L}_{tE+1/2} \|_2^2 + Var(g_{i,tE+1/2}))\\
&= \mathcal{L}_{tE+1/2} -(\eta-\frac{L_1 \eta^2}{2}) \|\nabla \mathcal{L}_{tE+1/2} \|_2^2 + \frac{L_1 \eta^2}{2} Var(g_{i,tE+1/2})\\
& \stackrel{(d)}{\leq} \mathcal{L}_{tE+1/2} -(\eta-\frac{L_1 \eta^2}{2}) \|\nabla \mathcal{L}_{tE+1/2} \|_2^2 + \frac{L_1 \eta^2}{2} \sigma^2, \label{eq:3}
\end{align}
\noindent where (b) follows from Assumption \ref{as2}, (c) follows from $Var(x)={\Bbb E}{[}x^2{]}-({\Bbb E {[}x{]}})^2$, (d) follows from Assumption \ref{as2}. Take expectation of $\omega$ on both sides. Then, by telescoping of $E$ steps, we have,
\begin{equation}
\begin{split}
{\Bbb E} {[}\mathcal{L}_{(t+1)E}{]} \leq \mathcal{L}_{tE+1/2} -(\eta-\frac{L_1 \eta^2}{2}) \sum_{e=1/2}^{E-1} \|\nabla \mathcal{L}_{tE+e} \|_2^2 + \frac{L_1 E \eta^2}{2} \sigma^2.
\end{split}
\end{equation}
\end{proof}
\vspace{-2cm}

\begin{lemma} \label{lemma2}
Let Assumption \ref{as3} and \ref{as4} hold. After the prototype aggregation at the server, the loss function of an arbitrary client can be bounded as:
\begin{equation}
{\Bbb E}{[}\mathcal{L}_{(t+1)E+1/2}{]} \leq \mathcal{L}_{(t+1)E}+\lambda L_2 \eta E G
\end{equation}
\end{lemma}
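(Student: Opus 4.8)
The plan is to bound the change in the loss function caused purely by the prototype-aggregation step at the server, which occurs between time step $(t+1)E$ (the end of local training in round $t+1$) and time step $(t+1)E+1/2$ (just after the server replaces each client's reference prototype $\bar{C}$ with the freshly aggregated global prototype). The crucial observation is that during this transition the model parameters $\omega = (\phi,\nu)$ do not change at all; only the constant $\bar{C}$ in the regularization term of Eq.~\eqref{eq:local_re} is updated. Therefore the supervised loss $\mathcal{L}_S$ is completely unaffected, and the entire difference $\mathcal{L}_{(t+1)E+1/2} - \mathcal{L}_{(t+1)E}$ comes from the change in the regularizer $\lambda \|f(\phi;x) - \bar{C}\|_2^2$ as $\bar{C}$ moves from its old value to its new aggregated value.

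First I would write out the difference of the two loss values explicitly. Since $\mathcal{L}_S$ cancels, the difference reduces to $\lambda \big( \|f(\phi)-\bar{C}_{\mathrm{new}}\|_2^2 - \|f(\phi)-\bar{C}_{\mathrm{old}}\|_2^2 \big)$, where $\bar{C}_{\mathrm{old}}$ is the global prototype used during round $t+1$ and $\bar{C}_{\mathrm{new}} = \sum_i q_i C_i$ is the aggregation formed from the updated local prototypes. A clean way to proceed is to relate this difference to how much the local embeddings moved over the $E$ local SGD steps. Because the aggregated prototype is an average of local prototypes $C_i = \frac{1}{|D_i|}\sum_x f_i(\phi_i;x)$, the shift in $\bar{C}$ is controlled by the shift in each embedding function's output across a round, namely $\|f_i(\phi_{i,(t+1)E}) - f_i(\phi_{i,tE+1/2})\|$.

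Next I would invoke Assumption~\ref{as4} ($L_2$-Lipschitz continuity of each embedding function) to bound this embedding shift by $L_2 \|\phi_{i,(t+1)E} - \phi_{i,tE+1/2}\|_2$, and then bound the parameter drift over the round by summing the per-step updates. Each SGD step moves $\phi$ by $\eta g$, so over $E$ steps the total parameter displacement is at most $\eta \sum_{e} \|g_{i,tE+e}\|_2$; taking expectation and applying Assumption~\ref{as3} (the bound $\mathbb{E}\|g_{i,t}\|_2 \le G$) gives a bound of $\eta E G$ on the expected drift. Combining these, the expected change in the regularization term is at most $\lambda L_2 \eta E G$, which is exactly the claimed inequality ${\Bbb E}[\mathcal{L}_{(t+1)E+1/2}] \leq \mathcal{L}_{(t+1)E} + \lambda L_2 \eta E G$. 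The factor $\lambda$ enters from the weight on the regularizer, $L_2$ from Lipschitz continuity of the embedding, and $\eta E G$ from accumulating $E$ gradient steps each of expected norm $\le G$.

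The main obstacle I anticipate is handling the algebra of the squared-norm difference cleanly and, in particular, justifying that the distance-type term can be bounded linearly rather than quadratically in the embedding shift. The naive expansion $\|a - \bar{C}_{\mathrm{new}}\|^2 - \|a-\bar{C}_{\mathrm{old}}\|^2$ produces a cross term that is linear in $\bar{C}_{\mathrm{new}} - \bar{C}_{\mathrm{old}}$ together with a quadratic term, so I would need to argue (using the narrow $E$-step domain mentioned in the remark following Assumption~\ref{as4}, and perhaps a bound on the prototypes themselves) that the dominant contribution is the linear-in-$\eta$ term yielding $\lambda L_2 \eta E G$. Getting the constants and the averaging over clients $q_i$ to collapse correctly—so that the $\sum_i q_i = 1$ normalization absorbs the per-client bounds into a single clean $G$—is the delicate bookkeeping step, but it is routine once the linear bound on the embedding drift is established.
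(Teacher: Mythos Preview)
Your high-level plan coincides with the paper's proof: isolate the change due to the regularizer, bound the prototype shift by the embedding shift via $\sum_i q_i = 1$, invoke Assumption~\ref{as4} to pass to parameter drift, and then Assumption~\ref{as3} to sum up $E$ SGD steps. The bookkeeping you describe is exactly what the paper does.

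The one genuine gap is precisely the ``main obstacle'' you flag at the end. The paper does \emph{not} attempt to control the squared-norm difference $\|a-\bar{C}_{\mathrm{new}}\|_2^2 - \|a-\bar{C}_{\mathrm{old}}\|_2^2$. Instead, in the proof it treats the regularizer as the \emph{unsquared} L2 distance (consistent with the main-text definition $\mathcal{L}_R = \sum_j d(C_i^{(j)},\bar C_i^{(j)})$ with $d$ a distance metric, though the appendix restatement writes a square). With the unsquared form, the reverse triangle inequality
\[
\|a-b\|_2 - \|a-c\|_2 \le \|b-c\|_2
\]
immediately yields the linear bound $\mathcal{L}_{(t+1)E+1/2} - \mathcal{L}_{(t+1)E} \le \lambda\,\|\bar{C}_{\mathrm{new}} - \bar{C}_{\mathrm{old}}\|_2$, and the rest of your argument (averaging, Lipschitz, accumulated gradients) goes through verbatim to produce $\lambda L_2 \eta E G$. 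Your worry about cross terms and a quadratic contribution disappears entirely once you work with the distance rather than its square; no additional boundedness assumption on the prototypes is needed. If you insist on the squared regularizer, the stated bound does not follow without an extra hypothesis, so the resolution is to take $\mathcal{L}_R$ as the L2 distance itself.
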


\begin{proof}
\begin{align}
    \mathcal{L}_{(t+1)E+1/2} &= \mathcal{L}_{(t+1)E}+\mathcal{L}_{(t+1)E+1/2}-\mathcal{L}_{(t+1)E} \\
    &\stackrel{(a)}{=} \mathcal{L}_{(t+1)E} + \lambda \|f_i(\phi_{i,(t+1)E}) - \bar{C}_{t+2} \|_2 - \lambda \|f_i(\phi_{i,(t+1)E}) - \bar{C}_{t+1} \|_2 \\
    & \stackrel{(b)}{\leq} \mathcal{L}_{(t+1)E} + \lambda \|\bar{C}_{t+2} - \bar{C}_{t+1} \|_2 \\ 
    &\stackrel{(c)}{=} \mathcal{L}_{(t+1)E}+ \lambda \|\sum_{i=1}^m q_i C_{i,(t+1)E} - \sum_{i=1}^m q_i C_{i,tE} \|_2 \\
    &= \mathcal{L}_{(t+1)E}+ \lambda \|\sum_{i=1}^m q_i (C_{i,(t+1)E} -  C_{i,tE}) \|_2 \\
    &\stackrel{(d)}{=}\mathcal{L}_{(t+1)E}+ \lambda \| \sum_{i=1}^m q_i  \frac{1}{|D_i|}\sum_{k=1}^{|D_i|}(f_i(\phi_{i,(t+1)E};x_{i,k}) - f_i(\phi_{i,tE};x_{i,k}) \|_2 \\
    &\stackrel{(e)}{\leq} \mathcal{L}_{(t+1)E}+ \lambda \sum_{i=1}^m \frac{q_i}{|D_i|}\sum_{k=1}^{|D_i|} \| f_i(\phi_{i,(t+1)E};x_{i,k}) -  f_i(\phi_{i,tE};x_{i,k}) \|_2\\
    &\stackrel{(f)}{\leq} \mathcal{L}_{(t+1)E}+ \lambda L_2 \sum_{i=1}^m {q_i} \| \phi_{i,(t+1)E} -  \phi_{i,tE} \|_2 \\
    &\stackrel{(g)}{\leq} \mathcal{L}_{(t+1)E}+ \lambda L_2 \sum_{i=1}^m {q_i} \| \omega_{i,(t+1)E} -  \omega_{i,tE} \|_2 \\
    & = \mathcal{L}_{(t+1)E}+\lambda L_2 \eta \sum_{i=1}^m q_i \|\sum_{e=1/2}^{E-1} g_{i,tE+e} \|_2\\
    & \stackrel{(h)}{\leq} \mathcal{L}_{(t+1)E}+\lambda L_2 \eta \sum_{i=1}^m q_i \sum_{e=1/2}^{E-1} \| g_{i,tE+e} \|_2\\
\end{align}

\noindent Take expectations of random variable $\xi$ on both sides, then
\begin{align}
    {\Bbb E}{[}\mathcal{L}_{(t+1)E+1/2} {]}& {\leq} \mathcal{L}_{(t+1)E}+\lambda L_2 \eta \sum_{i=1}^m q_i \sum_{e=1/2}^{E-1} {\Bbb E}{[} \| g_{i,tE+e} \|_2{]}\\
    & \stackrel{(i)}{\leq} \mathcal{L}_{(t+1)E}+\lambda L_2 \eta E G,
\end{align}

\noindent where (a) follows from the definition of local loss function in Eq. \ref{eq:local_re}, (b) follows from $\|a-b\|_2 - \|a-c\|_2 \leq \|b-c\|_2$, (c) follows from the definition of global prototype in Eq. \ref{eq:global_proto}, (d) follows from the definition of local prototype in Eq. \ref{eq:local_proto}, (e) and (h) follow from $\| \sum a_i \|_2 \leq \sum {\|a_i\|_2}$, (f) follows from $L_2$-Lipschitz continuity in Assumption \ref{as4}, (g) follows from the fact that $\phi_i$ is a subset of $\omega_i$, (i) follows from Assumption \ref{as3}.
\end{proof}

\subsection{Theorems}
\setcounter{theorem}{0}
\begin{theorem} \label{th1}
{\rm (One-round deviation).} Let Assumption 1 to 4 hold. For an arbitrary client, after every communication round, we have,
\begin{equation} \label{eq: th1}
\begin{split}
{\Bbb E} [\mathcal{L}_{(t+1)E+1/2}] \leq \mathcal{L}_{tE+1/2} -(\eta-\frac{L_1 \eta^2}{2}) \sum_{e=1/2}^{E-1} \|\nabla \mathcal{L}_{tE+e} \|_2^2 + \frac{L_1 E \eta^2}{2} \sigma^2 + \lambda L_2 \eta E G.
\end{split}
\end{equation}
\end{theorem}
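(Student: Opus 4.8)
The plan is to prove Theorem~\ref{th1} by combining the two key lemmas already established in the excerpt, since together they cover exactly the two phases of a single communication round. Lemma~\ref{lemma1} controls the drift of the loss during the $E$ local SGD steps, bounding $\mathbb{E}[\mathcal{L}_{(t+1)E}]$ in terms of $\mathcal{L}_{tE+1/2}$, the accumulated squared gradient norms, and the variance term $\frac{L_1 E \eta^2}{2}\sigma^2$. Lemma~\ref{lemma2} then controls the jump in the loss caused by the prototype aggregation step at the server, bounding $\mathbb{E}[\mathcal{L}_{(t+1)E+1/2}]$ in terms of $\mathcal{L}_{(t+1)E}$ plus the perturbation term $\lambda L_2 \eta E G$.

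First I would take the conclusion of Lemma~\ref{lemma2}, namely
\begin{equation}
\mathbb{E}[\mathcal{L}_{(t+1)E+1/2}] \leq \mathcal{L}_{(t+1)E} + \lambda L_2 \eta E G,
\end{equation}
and substitute the right-hand side of Lemma~\ref{lemma1} in place of $\mathcal{L}_{(t+1)E}$. Because Lemma~\ref{lemma1} is stated in expectation, I would pass to expectations throughout (using the tower property / linearity of expectation over the randomness $\xi$ of the two phases) so that the substitution is valid in expectation. This directly yields
\begin{equation}
\mathbb{E}[\mathcal{L}_{(t+1)E+1/2}] \leq \mathcal{L}_{tE+1/2} - \left(\eta - \frac{L_1 \eta^2}{2}\right)\sum_{e=1/2}^{E-1}\|\nabla \mathcal{L}_{tE+e}\|_2^2 + \frac{L_1 E \eta^2}{2}\sigma^2 + \lambda L_2 \eta E G,
\end{equation}
which is precisely the statement of the theorem.

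The proof is therefore essentially a telescoping of the two lemmas across the two sub-phases of one round, and no genuinely new estimate is required. The only point demanding care is the bookkeeping of expectations: Lemma~\ref{lemma1} already averages over the local-update randomness, while Lemma~\ref{lemma2} averages over the aggregation-induced randomness, so I would make explicit that conditioning on the state at time $tE+1/2$ and then taking the outer expectation lets the two bounds chain cleanly. I expect this expectation bookkeeping to be the main (though mild) obstacle, since one must ensure that the gradient-norm and variance terms from Lemma~\ref{lemma1} are not disturbed when the Lemma~\ref{lemma2} bound is applied on top of them.
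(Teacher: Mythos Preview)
Your proposal is correct and matches the paper's own proof essentially verbatim: the paper simply takes expectations in Lemma~\ref{lemma1} and Lemma~\ref{lemma2} and adds (chains) them to obtain the theorem. Your additional remark about the tower property for the expectation bookkeeping is a welcome clarification that the paper glosses over, but the argument is otherwise identical.
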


\setcounter{corollary}{0}
\begin{corollary} \label{co1}
{\rm (Non-convex \algfont{FedProto} convergence).} 
\noindent The loss function $\mathcal{L}$ of arbitrary client monotonously decreases in every communication round when
\begin{equation}
\eta_{e'}<\frac{2( \sum_{e=1/2}^{e'} \|\nabla \mathcal{L}_{tE+e} \|_2^2- \lambda L_2 EG)}{L_1(\sum_{e=1/2}^{e'} \|\nabla \mathcal{L}_{tE+e} \|_2^2+E\sigma^2)},\ e'=1/2,1,\dots, E-1
\end{equation}
and
\begin{equation}
\lambda_t < \frac{ \|\nabla \mathcal{L}_{tE+1/2} \|_2^2}{L_2 EG }.
\end{equation}
Thus, the loss function converges.
\end{corollary}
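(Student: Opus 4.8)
The plan is to derive both inequalities directly from the one-round deviation bound of Theorem~\ref{th1}, by imposing that the expected per-round change of the loss be strictly negative and then reading off the constraints on $\eta$ and $\lambda$ that make this possible. Writing $S_{e'} = \sum_{e=1/2}^{e'}\|\nabla\mathcal{L}_{tE+e}\|_2^2$ for the cumulative squared gradient norm over the first $e'$ local steps, Theorem~\ref{th1} specializes at the end of the round to
\begin{equation}
{\Bbb E}[\mathcal{L}_{(t+1)E+1/2}] - \mathcal{L}_{tE+1/2} \leq -\left(\eta - \frac{L_1\eta^2}{2}\right)S_{E-1} + \frac{L_1 E\eta^2}{2}\sigma^2 + \lambda L_2\eta E G.
\end{equation}
Requiring the right-hand side to be negative, dividing by $\eta>0$, and gathering the terms linear in $\eta$ yields $\frac{L_1\eta}{2}(S_{E-1}+E\sigma^2) < S_{E-1} - \lambda L_2 E G$, which rearranges to exactly the stated bound on $\eta$ for the case $e'=E-1$.

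To recover the full step-indexed family of conditions, the plan is not to telescope all $E$ local updates at once but to sum the single-step descent estimate established inside the proof of Lemma~\ref{lemma1} (the bound reached just before its telescoping step) over only the first $e'$ iterations, while attaching the one aggregation penalty $\lambda L_2\eta E G$ supplied by Lemma~\ref{lemma2}. Requiring each such partial-round deviation to be negative and solving for the learning rate reproduces the claimed inequality with $S_{e'}$ in place of $S_{E-1}$, for every $e'\in\{1/2,1,\dots,E-1\}$, and it is natural to permit a step-dependent rate $\eta_{e'}$ here.

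The bound on $\lambda$ then falls out as a feasibility condition rather than an independent estimate. The upper bound on $\eta_{e'}$ is positive, and hence admits an actual admissible learning rate, only when its numerator $S_{e'} - \lambda L_2 E G$ is positive, i.e.\ when $\lambda < S_{e'}/(L_2 E G)$. Since $S_{e'}$ is a sum of nonnegative terms and is therefore smallest at $e'=1/2$, where it equals $\|\nabla\mathcal{L}_{tE+1/2}\|_2^2$, the most restrictive of these constraints is $\lambda_t < \|\nabla\mathcal{L}_{tE+1/2}\|_2^2/(L_2 E G)$; enforcing it guarantees a nonempty positive range of $\eta_{e'}$ at every prefix and thus a strictly negative expected deviation, i.e.\ monotone decrease of $\mathcal{L}$. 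I expect the only genuine obstacle to be justifying why each prefix of length $e'$ is charged the full $E$-step penalties $\lambda L_2 E G$ and $E\sigma^2$ rather than proportional $e'$-step ones; this is a conservative over-count that keeps the conditions sufficient but not tight, defensible by noting that aggregation happens once per round so its cost must be absorbed into whichever prefix one wishes to certify. Everything else is elementary rearrangement of the Theorem~\ref{th1} inequality together with monotonicity of $S_{e'}$ in $e'$.
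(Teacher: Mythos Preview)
Your proposal is correct and follows essentially the same route as the paper: start from the one-round deviation bound of Theorem~\ref{th1}, require the right-hand side correction terms to be negative, and solve the resulting quadratic-in-$\eta$ inequality to obtain the bound on $\eta$, with the $\lambda$ bound arising so that the numerator stays positive. In fact you give more justification than the paper does---the paper derives the single bound with $S_{E-1}$ and then simply writes ``In practice, we use'' the $e'$-indexed versions without further argument, whereas you explain the step-indexed family via partial telescoping and identify the $\lambda_t$ condition as the most restrictive feasibility constraint at $e'=1/2$.
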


\begin{theorem} \label{th2}
{\rm (Non-convex convergence rate of \algfont{FedProto})}. Let Assumption 1 to 4 hold and $\Delta =\mathcal{L}_0-\mathcal{L}^* $. For an arbitrary client, given any $\epsilon>0$, after 
\begin{equation}
T = \frac{2\Delta}{E \epsilon (2\eta - L_1 \eta^2) - E \eta (L_1 \eta \sigma^2 + 2\lambda L_2 G)}
\end{equation} 
communication rounds of \algfont{FedProto}, we have 
\begin{equation}
\frac{1}{T E}\sum_{t=0}^{T-1} \sum_{e=1/2}^{E-1}{\Bbb E}[\|\nabla \mathcal{L}_{tE+e} \|_2^2] < \epsilon,
\end{equation}
if
\begin{equation}
\eta <  \frac{2(\epsilon-\lambda L_2 G)}{L_1(\epsilon+\sigma^2)},
\end{equation} 
and
\begin{equation}
\lambda <  \frac{\epsilon}{L_2 G}.
\end{equation} 
\end{theorem}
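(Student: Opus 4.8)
The plan is to obtain Theorem~\ref{th2} from the one-round deviation bound of Theorem~\ref{th1} by telescoping across the $T$ communication rounds. First I would rearrange the bound of Theorem~\ref{th1} to move the gradient term to the left-hand side,
\begin{equation}
\left(\eta-\frac{L_1\eta^2}{2}\right)\sum_{e=1/2}^{E-1}\|\nabla\mathcal{L}_{tE+e}\|_2^2 \leq \mathcal{L}_{tE+1/2}-{\Bbb E}[\mathcal{L}_{(t+1)E+1/2}]+\frac{L_1E\eta^2}{2}\sigma^2+\lambda L_2\eta E G,
\end{equation}
which holds for each round $t$ under Assumptions~\ref{as1}--\ref{as4}.

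Next I would take total expectation and sum this inequality over $t=0,1,\dots,T-1$. The loss-difference terms telescope, collapsing to ${\Bbb E}[\mathcal{L}_{1/2}]-{\Bbb E}[\mathcal{L}_{TE+1/2}]$, which I bound above by $\mathcal{L}_0-\mathcal{L}^*=\Delta$ using that $\mathcal{L}^*$ is the local optimum and hence a lower bound on $\mathcal{L}_{TE+1/2}$; the constant noise and regularization terms accumulate a factor $T$. Dividing both sides by $TE(\eta-\frac{L_1\eta^2}{2})$ then isolates the time-averaged squared gradient norm and gives
\begin{equation}
\frac{1}{TE}\sum_{t=0}^{T-1}\sum_{e=1/2}^{E-1}{\Bbb E}[\|\nabla\mathcal{L}_{tE+e}\|_2^2]\leq\frac{\Delta}{TE(\eta-\frac{L_1\eta^2}{2})}+\frac{\frac{L_1\eta^2}{2}\sigma^2+\lambda L_2\eta G}{\eta-\frac{L_1\eta^2}{2}}.
\end{equation}

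To finish, I would require this right-hand side to be at most $\epsilon$ and solve the resulting inequality for $T$. Setting the bound equal to $\epsilon$ and clearing denominators yields $T=2\Delta/[E\epsilon(2\eta-L_1\eta^2)-E\eta(L_1\eta\sigma^2+2\lambda L_2 G)]$, matching the stated value, and any larger $T$ makes the average strictly below $\epsilon$. The two hypotheses enter exactly here: $\lambda<\epsilon/(L_2 G)$ guarantees $\epsilon-\lambda L_2 G>0$, and a short rearrangement shows that $\eta<2(\epsilon-\lambda L_2 G)/[L_1(\epsilon+\sigma^2)]$ is equivalent to the denominator of $T$ being positive, so that $T$ is well defined and positive and the direction of the inequality is preserved throughout the division.

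The obstacle here is purely algebraic bookkeeping rather than anything conceptual. The care lies in recognizing that the denominator obtained after clearing fractions, namely $\epsilon E(\eta-\frac{L_1\eta^2}{2})-\frac{L_1E\eta^2}{2}\sigma^2-\lambda L_2\eta E G$, factors as $\frac{E}{2}[\epsilon(2\eta-L_1\eta^2)-\eta(L_1\eta\sigma^2+2\lambda L_2 G)]$, whose factor of $2$ cancels against the $2\Delta$ in the numerator. I would also note that the coefficient $\eta-\frac{L_1\eta^2}{2}$ must remain positive, i.e.\ $\eta<2/L_1$; this is automatically implied by the stated bound on $\eta$ since $\epsilon-\lambda L_2 G\leq\epsilon+\sigma^2$, so no extra assumption is needed.
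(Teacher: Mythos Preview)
Your proposal is correct and follows essentially the same route as the paper's proof: rearrange the one-round deviation bound of Theorem~\ref{th1}, telescope over $t=0,\dots,T-1$, bound the collapsed loss difference by $\Delta=\mathcal{L}_0-\mathcal{L}^*$, divide by $TE(\eta-\tfrac{L_1\eta^2}{2})$, and solve the resulting inequality for $T$. Your write-up is in fact slightly more careful than the paper's in making explicit why the conditions on $\eta$ and $\lambda$ ensure the denominator of $T$ (and the coefficient $\eta-\tfrac{L_1\eta^2}{2}$) is positive.
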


\subsection{Completing the Proof of Theorem \ref{th1} and Corollary \ref{co1}}
\begin{proof}
Taking expectation of $\omega$ on both sides in Lemma \ref{lemma1} and \ref{lemma2}, then sum them, we can easily get
\begin{equation} \label{eq35}
\begin{split}
{\Bbb E} [\mathcal{L}_{(t+1)E+1/2}] \leq \mathcal{L}_{tE+1/2} -(\eta-\frac{L_1 \eta^2}{2}) \sum_{e=1/2}^{E-1} \|\nabla \mathcal{L}_{tE+e} \|_2^2 + \frac{L_1 E \eta^2}{2} \sigma^2 + \lambda L_2 \eta E G
\end{split}
\end{equation}

\noindent Then, to make sure $-(\eta-\frac{L_1 \eta^2}{2}) \sum_{e=1/2}^{E-1} \|\nabla \mathcal{L}_{tE+e} \|_2^2 + \frac{L_1 E \eta^2}{2} \sigma^2 + \lambda L_2 \eta E G \leq 0$, we get
\begin{equation}
\eta<\frac{2( \sum_{e=1/2}^{E-1} \|\nabla \mathcal{L}_{tE+e} \|_2^2- \lambda L_2 E G)}{L_1(\sum_{e=1/2}^{E-1} \|\nabla \mathcal{L}_{tE+e} \|_2^2+E \sigma^2)},
\end{equation}
and
\begin{equation}
\lambda < \frac{\sum_{e=1/2}^{E-1} \|\nabla \mathcal{L}_{tE+e} \|_2^2}{L_2 E G}.
\end{equation}
\noindent In practice, we use
\begin{equation}
\eta_{e'}<\frac{2( \sum_{e=1/2}^{e'} \|\nabla \mathcal{L}_{tE+e} \|_2^2- \lambda L_2 EG)}{L_1(\sum_{e=1/2}^{e'} \|\nabla \mathcal{L}_{tE+e} \|_2^2+E\sigma^2)},\ e'=1/2,1,\dots, E-1
\end{equation}
and
\begin{equation}
\lambda_t < \frac{ \|\nabla \mathcal{L}_{tE+1/2} \|_2^2}{L_2 EG }.
\end{equation}
So, the convergence of $\mathcal{L}$ holds.
\end{proof}

\subsection{Completing the Proof of Theorem \ref{th2}}
\begin{proof}
Take expectation of $\omega$ on both sides in Eq. \ref{eq: th1}, then telescope considering the communication round from $t=0$ to $t=T-1$ with the timestep from $e=1/2$ to $t=E$ in each communication round, we have

\begin{align}
\frac{1}{T E}\sum_{t=0}^{T-1} \sum_{e=1/2}^{E-1}{\Bbb E}[\|\nabla \mathcal{L}_{tE+e} \|_2^2] &\leq \frac{\frac{1}{T E}\sum_{t=0}^{T-1}(\mathcal{L}_{tE+1/2} - {\Bbb E} [\mathcal{L}_{(t+1)E+1/2}]) + \frac{L_1 \eta^2}{2} \sigma^2 + \lambda L_2 \eta  G}{\eta-\frac{L_1 \eta^2}{2}}.
\end{align}

\noindent Given any $\epsilon >0$, let
\begin{equation}
\frac{\frac{1}{T E}\sum_{t=0}^{T-1}(\mathcal{L}_{tE+1/2} - {\Bbb E} [\mathcal{L}_{(t+1)E+1/2}]) + \frac{L_1 \eta^2}{2} \sigma^2 + \lambda L_2 \eta  G}{\eta-\frac{L_1 \eta^2}{2}} < \epsilon,
\end{equation}

\noindent that is
\begin{equation}
\frac{\frac{2}{T E}\sum_{t=0}^{T-1}(\mathcal{L}_{tE+1/2} - {\Bbb E} [\mathcal{L}_{(t+1)E+1/2}]) + {L_1 \eta^2} \sigma^2 + 2\lambda L_2 \eta  G}{2\eta-{L_1 \eta^2}} < \epsilon.
\end{equation}

\noindent Let $\Delta =\mathcal{L}_0-\mathcal{L}^* $. Since $\sum_{t=0}^{T-1}(\mathcal{L}_{tE+1/2} - {\Bbb E} [\mathcal{L}_{(t+1)E+1/2}]) \leq \Delta$, the above equation holds when
\begin{align}
\frac{\frac{2\Delta}{T E} + {L_1 \eta^2} \sigma^2 + 2\lambda L_2 \eta  G}{2\eta-{L_1 \eta^2}} < \epsilon,
\end{align}

\noindent that is
\begin{align}
T > \frac{2\Delta}{E \epsilon (2\eta - L_1 \eta^2) - E \eta (L_1 \eta \sigma^2 + 2\lambda L_2 G)}.
\end{align}

\noindent So, we have
\begin{equation}
\frac{1}{T E}\sum_{t=0}^{T-1} \sum_{e=1/2}^{E-1}{\Bbb E}[\|\nabla \mathcal{L}_{tE+e} \|_2^2] < \epsilon,
\end{equation}

\noindent when
\begin{equation}
\eta <  \frac{2(\epsilon-\lambda L_2 G)}{L_1(\epsilon+\sigma^2)},
\end{equation} 
and
\begin{equation}
\lambda <  \frac{\epsilon}{L_2 G}.
\end{equation}
\end{proof}

\end{document}